\definecolor{DarkRed}{rgb}{0.368,0.097,0.078}
\declaretheoremstyle[
	    spaceabove=\topsep, 
	    spacebelow=\topsep, 
	    headfont=\normalfont\bfseries,
	    bodyfont=\normalfont\itshape,
	    notefont=\normalfont\bfseries,
	    notebraces={(}{)},
	    postheadspace=0.5em, 
	    headpunct={},
	    postfoothook=\noindent\ignorespaces
    ]{theorem}
\declaretheorem[style=theorem,numberwithin=section]{theorem}
\declaretheoremstyle[
	    spaceabove=\topsep, 
	    spacebelow=\topsep, 
	    headfont=\normalfont\bfseries,
	    bodyfont=\normalfont,
	    notefont=\normalfont\bfseries,
	    notebraces={(}{)},
	    postheadspace=0.5em, 
	    headpunct={},
	    postfoothook=\noindent\ignorespaces
    ]{definition}
\declaretheoremstyle[
        spaceabove=\topsep, 
        spacebelow=\topsep, 
        headfont=\normalfont\bfseries,
        bodyfont=\normalfont,
        notefont=\normalfont\bfseries,
        % notebraces={(}{)},
        notebraces={}{},
        postheadspace=0.5em, 
        qed=$\blacksquare$, 
        headpunct={},
        postfoothook=\noindent\ignorespaces
    ]{proofstyle}
\declaretheorem[style=proofstyle,numbered=no,name=Proof]{proof}
\declaretheorem[style=theorem,sibling=theorem,name=Lemma]{lemma}
\declaretheorem[style=theorem,sibling=theorem,name=Corollary]{corollary}
\declaretheorem[style=theorem,numbered=no,name=Theorem]{theorem*}
\declaretheorem[style=theorem,numbered=no,name=Lemma]{lemma*}
\declaretheorem[style=theorem,numbered=no,name=Corollary]{corollary*}
\declaretheorem[style=theorem,numbered=no,name=Proposition]{proposition*}
\declaretheorem[style=theorem,numbered=no,name=Claim]{claim*}
\declaretheorem[style=theorem,numbered=no,name=Fact]{fact*}
\declaretheorem[style=theorem,numbered=no,name=Observation]{observation*}
\declaretheorem[style=theorem,numbered=no,name=Conjecture]{conjecture*}
\declaretheorem[style=definition,sibling=theorem,name=Example]{example}
\declaretheorem[style=definition,numbered=no,name=Definition]{definition*}
\declaretheorem[style=definition,numbered=no,name=Remark]{remark*}
\declaretheorem[style=definition,numbered=no,name=Example]{example*}
\declaretheorem[style=definition,numbered=no,name=Question]{question*}
\DeclareMathAlphabet{\mathbfsf}{\encodingdefault}{\sfdefault}{bx}{n}
\DeclareMathOperator*{\conv}{conv}
\newcommand{\VC}{\operatorname{VC}}
\newcommand{\Risk}{\operatorname{Risk}}
\newcommand{\absop}{\operatorname{abs}}
\renewcommand{\phi}{\varphi}
\newcommand{\pred}[1]{\left\llbracket #1 \right\rrbracket}
\def\d{{\mathrm d}}
\newcommand{\I}{\mathbb{I}}
\newcommand{\R}{\mathbb{R}}
\newcommand{\N}{\mathbb{N}}
\newcommand{\paren}[1]{\left( #1 \right)}
\newcommand{\set}[1]{\left\{ #1 \right\}}
\newcommand{\beq}{\begin{eqnarray*}}
\newcommand{\eeq}{\end{eqnarray*}}
\newcommand{\beqn}{\begin{eqnarray}}
\newcommand{\eeqn}{\end{eqnarray}}
\newcommand{\ben}{\begin{enumerate}}
\newcommand{\een}{\end{enumerate}}
\newcommand{\bit}{\begin{itemize}}
\newcommand{\eit}{\end{itemize}}
\newcommand{\hide}[1]{}
\newcommand{\evalat}[2]{\left.#1\right|_{#2}}
\newcommand{\sgn}{\operatorname{sgn}}
\newcommand{\F}{\mathcal{F}}
\newcommand{\E}{\mathbb{E}}
\newcommand{\X}{\mathcal{X}}
\newcommand{\W}{\mathcal{W}}
\newcommand{\A}{\mathcal{A}}
\newcommand{\Y}{\mathcal{Y}}
\newcommand{\Z}{\mathcal{Z}}
\newcommand{\G}{\mathcal{G}}
\renewcommand{\H}{\mathcal{H}}
\renewcommand{\O}{\mathcal{O}}
\newcommand{\M}{\mathcal{M}}
\newcommand{\fat}{\operatorname{fat}}
\newcommand{\bepf}{\begin{proof}}
\newcommand{\enpf}{\end{proof}}
\newcommand\mywidehat[1]{\ThisStyle{%
  \setbox0=\hbox{$\SavedStyle#1$}%
  \stackengine{-1.0\ht0+.5pt}{$\SavedStyle#1$}{%
    \stretchto{\scaleto{\SavedStyle\mkern.15mu\char'136}{2.6\wd0}}{1.4\ht0}%
  }{O}{c}{F}{T}{S}%
}}
\newcommand{\RE}{\operatorname{RE}}
\newcommand{\AG}{\operatorname{AG}}
\title{Improved Generalization Bounds for Adversarially Robust Learning}
\author{
    Idan Attias\thanks{Department of Computer Science, Ben-Gurion University; \texttt{idanatti@post.bgu.ac.il}.} 
    \and Aryeh Kontorovich \thanks{Department of Computer Science, Ben-Gurion University; \texttt{karyeh@cs.bgu.ac.il}.}
    \and Yishay Mansour\thanks{Blavatnik School of Computer Science, Tel Aviv University and Google Research, Tel Aviv; \texttt{mansour.yishay@gmail.com}.}
}
\begin{document}
\maketitle

\begin{abstract}
We consider a model of robust learning in an adversarial
environment. The learner gets uncorrupted training data with access
to possible corruptions that may be affected by the adversary during
testing. The learner's goal is to build a robust classifier, which
will be
tested on future adversarial examples. 
The adversary is limited to $k$ possible corruptions for
each input.
We model the learner-adversary interaction as a zero-sum game.
This model is closely related to the adversarial examples
model of
\citet{schmidt2018adversarially,madry2017towards}.

Our main results consist of generalization bounds for the binary and
multiclass classification, as well as the real-valued case
(regression).
For the binary classification setting, we both tighten
the generalization bound of
\citet{feige2015learning},
and are also
able to handle infinite hypothesis classes.
The sample complexity is improved from
$\O(\frac{1}{\epsilon^4}\log(\frac{|\H|}{\delta}))$ to $\O\big(\frac{1}{\epsilon^2}(k\VC(\H)\log^{\frac{3}{2}+\alpha}(k\VC(\H))+\log(\frac{1}{\delta})\big)$ for any $\alpha > 0$.
Additionally, we extend the algorithm and generalization bound from the binary
to the multiclass and real-valued cases.
Along the way, we obtain results on fat-shattering dimension and Rademacher complexity of $k$-fold maxima
over function classes; these may be of independent interest.

For binary classification, the algorithm of \citet{feige2015learning}
uses a regret minimization algorithm and an ERM oracle as a
black box; we adapt it for the multiclass and regression settings.
The algorithm provides us with near-optimal policies for the players
on a given training sample.
\end{abstract}
\section{Introduction}\label{sec:intro}
We study the classification and regression problems in a setting of
adversarial examples. This setting is different from standard
supervised learning in that examples, at testing
time, may be corrupted in an adversarial manner to disrupt  the
learner's  performance. As standard supervised learning
methods have demonstrated vulnerabilities, the challenge to design reliable robust
models has gained significant attention, and has been termed {\em adversarial
examples}. We study the adversarially robust learning paradigm from
a generalization point of view.

We consider the following robust learning framework for multiclass
and real-valued functions of \citet{feige2015learning}. There is an unknown
distribution over the uncorrupted inputs domain. The learner
receives a labeled uncorrupted sample (the labels can be categorical
or real valued) and has knowledge during the training phase of all
possible corruptions that the adversary might effect. The learner
selects a hypothesis from a fixed hypothesis class (in our case,
a mixture of hypotheses from base class $\H$) that gives a
prediction (a distribution over predictions) for a corrupted input. The learner's accuracy is measured
by predicting the true label of the uncorrupted input while they
observe only the corrupted input during test time. Thus, their goal
is to find a policy that is 
robust against
those corruptions.  The
adversary is capable of corrupting each future input, but there are
only $k$ possible corruptions for 
each 
point in the instance space. 
This suggests the game-theoretic framework of a zero-sum game between the learner and the
adversary.
The model is closely related to the one proposed by
\citet{schmidt2018adversarially,madry2017towards}
and 
other
common robust optimization approaches \citep{BEN:09}, which deal
with bounded worst-case perturbations (under $\ell_\infty$ norm) on the
samples. In this work we do not assume any metric for the
corruptions: the adversary can map an input from the instance space to
any other space, but is limited with finitely many possible corruptions for
each input.

Our main results are generalization bounds for 
classification and regression.
For the binary classification setting, we improve the
generalization bound
given
in \citet{feige2015learning}. 
In particular, we allow for the use of infinite base hypothesis classes $\H$.
The
sample complexity has been improved from
$\O(\frac{1}{\epsilon^4}\log(\frac{|\H|}{\delta}))$ to \\
$\O\big(\frac{1}{\epsilon^2}(k\VC(\H)\log^{\frac{3}{2}+\alpha}(k\VC(\H))+\log(\frac{1}{\delta})\big)$, for any $\alpha > 0$.
Roughly speaking, the core of all proofs is a bound on the Rademacher complexity of the
$k$-fold maximum of the convex hull of the loss class of $\H$. The
$k$-fold maximum captures the $k$ possible corruptions for each
input. 
In the regression setting we provide three different generalization bounds. One of the main contributions in this setting is an upper bound on the empirical fat-shattering dimension of $k$-fold maximum class.

Our algorithm is an adaptation of the 
regret minimization algorithm
proposed
for binary classification
by \citet{feige2015learning} for computing
near optimal-policies for the players on the training data
to the 
multiclass classification 
settings.
It
is a variant of the algorithm found in \citet{cesa2007improved}
and based on the ideas of \citet{freund1999adaptive}. An ERM (empirical risk minimization) oracle is 
repeatedly used
to return a hypothesis from a fixed hypothesis class $\H$ that minimizes the error rate on a given sample, while weighting samples differently every time.
The learner uses a randomized classifier chosen uniformly from the mixture of hypotheses returned by the algorithm.

Thus, we extend the
ERM paradigm by using \textit{adversarial training} techniques instead of
merely find a hypothesis that minimizes the empirical risk. In contradistinction to
``standard'' learning, ERM often does not yield models that are
robust to adversarially corrupted examples
\citep{szegedy2013intriguing,biggio2013evasion,goodfellow2014explaining,kurakin2016adversarial,moosavi2016deepfool,tramer2017space}. 

\subsection{Subsequent Work: \citet*{montasser2019vc,montasser2020reducing} }\label{subsec:subsequent_work}
Following the 
conference version
\citep{attias2019improved}
of this work,
\citet*{montasser2019vc} have proved that $\VC$ classes are robustly PAC-learnable only improperly (that is, the hypothesis is selected from a broader class than that of the true concept), with respect to any arbitrary perturbation set, possibly of infinite size. The sample complexity\footnote{$\Tilde{\O}(\cdot)$ hides 
poly-logarithmic factors of $\VC,\VC^*,1/\epsilon,1/\delta$.} is independent of the number of allowed perturbations, $\tilde{\O}\Big(\frac{\VC(\H)\VC^*(\H)}{\epsilon}+\frac{1}{\epsilon}\log\frac{1}{\delta}\Big)$ in the realizable setting and $\tilde{\O}\Big(\frac{\VC(\H)\VC^*(\H)}{\epsilon^2}+\frac{1}{\epsilon^2}\log\frac{1}{\delta}\Big)$ in the agnostic setting, where $\VC^*(\H) $ denotes the dual $\VC$-dimension. Their approach relies on sample compression arguments whereas uniform convergence does not hold. 
As a by-product, for the case of $k<\infty$ possible corruptions for each input, they obtained a sample complexity of size $\O\left(
\frac{\VC(\H)\log k}{\epsilon^2}\text{polylog}(\frac{\VC(\H)\log k}{\epsilon}) +\frac{1}{\epsilon^2}\log(\frac{1}{\delta})
\right)$ for the zero-one robust loss (which is defined below).

The main difference of between the two works is the definition of the loss function.
Specifically, for functions $h_1,\ldots,h_T$, in the binary classification setting, we define the loss $\ell : \Delta(\H)\times\X\times\Y \rightarrow [0,1]$ by 
\begin{align}\label{def:continuous-loss}
\ell_1(h_1,\ldots,h_T,x,y)= \underset{z\in \rho(x)}{\max} \frac{1}{T} \sum_{i=1}^T \I\left[h_i(z)\neq y\right]=\underset{z\in \rho(x)}{\max} \left |\frac{1}{T} \sum_{i=1}^T  h_i(z)- y \right|,
\end{align}
which we refer to as the $[0,1]$-\textit{robust loss}.  \citet{montasser2019vc,montasser2020reducing}
defined a loss function $\ell : \H\times\X\times\Y \rightarrow \set{0,1}$ as follows
\begin{align}\label{def:zero-one-loss}
\ell_2(h,x,y)=\underset{z\in \rho(x)}{\max} \I\left[h(z)\neq y\right],
\end{align}
which we refer to as the \textit{zero-one robust loss}.
More specifically, they consider for functions $h_1,\ldots,h_T$ the loss
\begin{align}
\ell_3(h_1,\ldots,h_T,x,y)=\underset{z\in \rho(x)}{\max} \I\left[\mbox{\textrm Majority}(h_1(z), \ldots , h_T(z))\neq y\right],
\end{align}
where \textrm{Majority} takes the majority of its the Boolean inputs (and assume that $T$ is odd). Clearly, if \\$\ell_1(h_1,\ldots,h_T,x,y) < 1/2$ then $\ell_3(h_1,\ldots,h_T,x,y)=0$. However, if $\ell_3(h_1,\ldots,h_T,x,y)=0$ it only guarantees that $\ell_1(h_1,\ldots,h_T,x,y)<1/2$ but can be very far from zero. This is why an upper bound on sample complexity of $\ell_1 $ implies an upper bound on the sample complexity of $\ell_3$, but not vice versa.
We summarize the main results for both definitions in \cref{subsec:results-sum}.

The work of \citet{montasser2019vc}, that considers the zero-one robust loss, improper learning is necessary due to the lack of uniform convergence, which may arise in the case of infinite set of corruptions. The learner competes with the single optimal hypothesis in the class, and outputs a mixture of hypothesis to do so.
In this work, considering the  $[0,1]$-robust loss, 
we would like to guarantee and $\epsilon$-optimal value for the learner in a zero-sum game, via a mixed strategy, and so we find an $\epsilon$-optimal mixture of hypothesis. That is, we compete with the optimal mixture of hypothesis from the function class. In that sense, we are having a proper learning algorithm, with respect to the convex hull of the hypothesis class.

In another closely related work from the computational perspective, \citet{montasser2020reducing} reduced the problem of robust learning to non-robust learning. Namely, their algorithm using access to only a black-box PAC learner, similar to the algorithm of \citet{feige2015learning} that we employ in this paper. They provided an algorithm that achieves small robust risk in the realizable setting with sample complexity (that is independent of $k$) of $\tilde{\O}\Big(\frac{\VC(\H)\left(\VC^{*}(\H)\right)^2}{\epsilon}+\frac{1}{\epsilon}\log\frac{1}{\delta}\Big)$, 
and uses $\O\paren{\log^2(nk)+\log\frac{1}{\delta}}$ black-box oracle calls to any PAC-learner, where $n$ is the sample size.
Their result relies on sample compression and not uniform convergence.

\subsection{Uniform Convergence of the Zero-One Robust Loss Class} 
For the case of finite set of corruptions, and learning with respect to the zero-one robust loss, we show that the $\VC$ dimension of the robust loss class remains finite (as opposed to the case of infinite corruptions). As a result, we have uniform convergence, and robust $\mathrm{ERM}$ suffices to ensure learning. (The proof is in \cref{app:gen-bound-binary-additional}).
\begin{lemma}\label{lem:vc-robust-loss}
 For any class $\H$ of $\VC$ dimension $d$, and any adversary $\rho: \X \rightarrow 2^\X$ such that  $|\rho(x)|\leq k$, the $\VC$-dimension of the zero-one robust loss class $L^\rho_{\H}=\set{(x,y)\mapsto \underset{z\in \rho(x)}{\max}\I\left[h(z)\neq y\right]: h \in \H}$ is at most $\O(d \log k)$.
\end{lemma}
Via a standard uniform convergence argument, we have the following result.
\begin{theorem}
 For any class $\H \subseteq \{0,1\}^{\X}$ of $\VC$ dimension $d$, and any adversary $\rho: \X \rightarrow 2^{\X}$ such that  $|\rho(x)|\leq k$. For the robust zero-one loss function $\ell(h,x,y) = \underset{z\in \rho(x)}{\max} \I[h(z)\neq y]$, the sample complexity for the realizable setting is $\M_{\RE}(\epsilon,\delta,\H,\rho)= \O\paren{\frac{d\log k}{\epsilon}\log\frac{1}{\epsilon}+\frac{1}{\epsilon}\log\frac{1}{\delta}}$,
and the sample complexity for the agnostic setting is 
$\M_{\AG}(\epsilon,\delta,\H,\rho)=\O\paren{\frac{d\log k}{\epsilon^2}+\frac{1}{\epsilon^2}\log\frac{1}{\delta}}.$
\end{theorem}

\subsection{Main Results}\label{subsec:results-sum}
We provide a summary of the results for the $[0,1]$-robust loss and the zero-one robust loss (see \cref{def:continuous-loss,def:zero-one-loss} for the definitions) for robust $(\epsilon,\delta)$-PAC learning with finite set of possible corruptions.
\paragraph{Notation.} $d$ denotes the $\VC$ dimension, $d^*$ denote the dual dual-$\VC$ dimension, $\fat_{\gamma}(\cdot)$ is the $\gamma-$fat shattering dimension, and $k$ is the size of possible corruptions for each input. $\Tilde{\O}(\cdot)$ stands for for omitting poly-logarithmic factors  of $d,d^*,1/\epsilon,1/\delta$.

\begin{table}[H]
    \begin{center}
    \bgroup
    \setlength{\arrayrulewidth}{0.2mm} % lines thickness
    \setlength{\tabcolsep}{4pt} % column length
    \def\arraystretch{1.5} % row length
        \begin{tabular}{|c|c||c|}
            \hline
            \multicolumn{3}{|c|}{\textbf{Sample complexity for agnostic learning with $[0,1]$-robust loss}}
            \\
            \hline
            \hline
            %row1
            \textsc{Generalization} & \textsc{Binary Classification} & \textsc{Reference}
            \\
            \hline
            %row2
            Uniform Convergence
            &
            $\O\paren{\frac{1}{\epsilon^4}\log\frac{|\H|}{\delta}}$
            &\citet{feige2015learning}
             \\
            \hline
            %row3a
            \multirow{2}{9em}{\centering{Sample Compression}}
            &
             $\tilde{\O}\Big(\frac{dd^*}{\epsilon^4}+\frac{1}{\epsilon^4}\log\frac{1}{\delta}\Big)$
             & \multirow{2}{9em}{\centering{\citet{montasser2019vc}}}
            \\
            %row3b
            &
            $\tilde{\O}\Big(\frac{d\log{k}}{\epsilon^4}+\frac{1}{\epsilon^4}\log\frac{1}{\delta}\Big)$
            &
            \\
            \hline
            Uniform Convergence
            &
            $\tilde{\O} \Big(\frac{kd}{\epsilon^2}+\frac{1}{\epsilon^2}\log\frac{1}{\delta}\Big)$
            & This work
            \\
            \hline
            \hline
            &
            \textsc{Regression}
            &
            \\
            \hline
            Uniform Convergence
            &
            $\tilde{\O}\left(\inf_{\beta\geq 0}\set{\beta + \sqrt{\frac{k}{n}}\int_{\beta}^{1} \sqrt{\fat_{\gamma}(\H)}d\gamma } 
            + \sqrt{\frac{\log\left(\frac{1}{\delta}\right)}{n}}
            \right)$
            &
            This work
            \\
            \hline
        \end{tabular}
    \egroup
    \end{center}
\end{table}

\begin{table}[H]
    \begin{center}
    \bgroup
    \setlength{\arrayrulewidth}{0.2mm} % lines thickness
    \setlength{\tabcolsep}{1.5pt} % column length
    \def\arraystretch{1.6} % row length
        \begin{tabular}{|c|c|c||c|}
            \hline
            \multicolumn{4}{|c|}{\textbf{Sample complexity for binary classification with zero-one robust loss}}
            \\
            \hline
            \hline
            %row1
            \textsc{Generalization} & \textsc{Realizable} & \textsc{Agnostic} & \textsc{Reference}
            \\
            \hline
            %row2a
             \multirow{2}{9em}{\centering{Sample Compression}}
             & $\tilde{\O}\Big(\frac{dd^*}{\epsilon}+\frac{1}{\epsilon}\log\frac{1}{\delta}\Big)$ 
             & $\tilde{\O}\Big(\frac{dd^*}{\epsilon^2}+\frac{1}{\epsilon^2}\log\frac{1}{\delta}\Big)$
             & \multirow{2}{10em}{\centering{\citet{montasser2019vc}}}
             \\
             %row2b
             &$\tilde{\O}\paren{\frac{d\log k}{\epsilon}+\frac{1}{\epsilon}\log\frac{1}{\delta}}$  
             & $\tilde{\O}\paren{\frac{d\log k}{\epsilon^2}+\frac{1}{\epsilon^2}\log\frac{1}{\delta}}$ 
             &
             \\
            \hline
            %row3
             Uniform Convergence
             & 
             $\O \paren{\frac{d\log k}{\epsilon}\log\frac{1}{\epsilon}+\frac{1}{\epsilon}\log\frac{1}{\delta}}$ 
             & 
             $\O\paren{\frac{d\log k}{\epsilon^2}+\frac{1}{\epsilon^2}\log\frac{1}{\delta}}$
             & This work
             \\
            \hline
        \end{tabular}
    \egroup
    \end{center}
\end{table}

Whether we can achieve a sample complexity of $\approx \frac{d\log k}{\epsilon^2} \;\text{or}\; \frac{dd^*}{\epsilon^2}$ for agnostic learning with the $[0,1]$-robust loss remains an open question. 
The method of \citet{montasser2019vc} can be modified to accommodate learning with respect to the $[0,1]$ robust loss. Specifically, taking the majority of weak learners is not sufficient for obtaining an $\epsilon$-optimal mixed strategy. Rather, we take a majority of strong learners (each with $\epsilon$ error), each of which takes $\approx \frac{d}{\epsilon^2}$ samples (and not $\approx d$). This implies a sample complexity (via sample compression scheme) of $\frac{dd^*}{\epsilon^4}$ or $\frac{d\log(k)}{\epsilon^4}$.

\subsection{Other Related Work}
The most closley related works studying robust learning with adversarial examples are \citet{schmidt2018adversarially,madry2017towards}. Their model deals with bounded worst-case perturbations (under $\ell_\infty$ norm) on the samples. This is slightly different from our model as we mentioned above. 
Other related works that analyze the theoretical aspects of adversarial robust generalization are \citet{montasser2019vc,yin2019rademacher,awasthi2020rademacher,cullina2018pac,khim2018adversarial,raghunathan2019adversarial,diochnos2018adversarial,balda2019adversarial,pydi2019adversarial,tu2019theoretical,chen2020more,carmon2019unlabeled,alayrac2019labels,zhai2019adversarially,najafi2019robustness,levi2021domain,attias2022characterization,attias2022adversarially}. A different notion of robustness by \citet{xu2012robustness} is shown to be sufficient and necessary for standard generalization.
Learning with adversarial examples is extensively studied from the computational point of view as well \citep{bubeck2018adversarial,mahloujifar2019curse,mahloujifar2019can,chen2017robust,awasthi2019robustness,awasthi2019adversarially,sinha2017certifying,diakonikolas2019nearly,diakonikolas2020complexity,montasser2020efficiently,gourdeau2019hardness,ashtiani2020black}.

All of our results based on a robust learning model for binary
classification suggested by \citet{feige2015learning}. The works of
\citet{mansour2014robust,feige2015learning,feige2018robust} consider {\em
robust inference} for the binary and multiclass case. The robust
inference model assumes that the learner knows both the distribution
and the target function, and the main task is given a corrupted
input, derive in a computationally efficient way a classification
which will minimize the error. In this work we consider only the
learning setting, where the learner has only access to an
uncorrupted sample, and need to approximate the target function on
possibly corrupted inputs, using a restricted hypothesis class $\H$.

The work of \citet{globerson2006nightmare} and its
extensions \citet{teo2008convex,dekel2010learning}
discuss a robust learning model where an uncorrupted sample is drawn
from an unknown distribution, and the goal is to learn a linear
classifier 
resilient against
missing attributes in
future test examples. They discuss both the static model (where the
set of missing attributes is selected independently from the
uncorrupted input) and the dynamic model (where the set of missing
attributes may depend on the uncorrupted input). The model we use
\citep{feige2015learning} extends the robust learning model to handle
corrupted inputs (and not only missing attributes) and an arbitrary
hypothesis class (rather than only linear classifiers).

There is a vast literature in statistics, operation research and
machine learning regarding various noise models. Typically, most
noise models assume a random process that generates the noise. In
computational learning theory, popular noise models include random
classification noise \citep{angluin1988learning} and malicious noise
\citep{valiant1985learning,kearns1993learning}. In the
malicious noise model, the adversary gets to arbitrarily corrupt
some small fraction of the examples; in contrast, in our model the
adversary can always corrupt every example, but only in a limited
way.

\section{Model} \label{sec:model}
There is an unknown distribution $D$ over 
some
domain $\X$ of uncorrupted examples and a finite domain of corrupted examples $\Z$, possibly the same as $\X$. 
Our setting is the \textit{agnostic PAC-learning} framework in a deterministic scenario.
The label of each input is uniquely determined by an arbitrary unknown target function $c:\X \rightarrow \Y$. The function
 $c$ maps each uncorrupted input $x\in \X$ to a label $c(x)=y$, where the set of labels $\Y$ can be $\set{1,\dots,l}$ or $\R$.

The adversary is able to corrupt an input by mapping an uncorrupted input $x\in \X$ to a corrupted one
$z\in \Z$. There is a mapping $\rho$ which for every $x \in \X$ defines a set $\rho(x) \subseteq \Z$, such that $|\rho(x)| \leq k$. The adversary can map an uncorrupted input $x$ to any corrupted input $z \in \rho(x)$. We assume that the learner has an access to $\rho(\cdot)$ during the training phase.

There is a fixed hypothesis class $\H$ of hypothesis $h:\Z \mapsto \Y$ over corrupted inputs. The learner observes an uncorrupted sample $S_u=\{\langle x_1,c(x_1) \rangle, \ldots, \langle x_m, c(x_m) \rangle\}$, where $x_i$ is drawn i.i.d. from $D$,
and selects a mixture of hypotheses from $\H$, $\Tilde{h}\in \Delta(\H)$. In the classification setting, $\Tilde{h}: \Z \rightarrow \Delta{(\Y)}$ is a mixture $\set{h_i|\H \ni h_i:\Z \rightarrow \Y}_{i=1}^T$ such that label $y\in\Y=\set{1,\dots,l}$ gets a mass of $\sum_{i=1}^T\alpha_i\I\left[h_i(z)= y\right]$ where $\sum_{i=1}^T\alpha_i$=1. For each hypothesis $h\in\H$ in the mixture we use the zero-one loss to measure the quality of the classification, i.e., $\ell(h(z),y)=\I\left[h(z)\neq y\right]$. The loss of $\Tilde{h}\in\Delta{(\H)}$ is defined by $\ell(\Tilde{h}(z),y)= \sum_{i=1}^T \alpha_i \ell(h_i(z),y)$. In the regression setting, $\Tilde{h}: \Z \rightarrow \R$ is a mixture $\set{h_i|\H \ni h_i:\Z \rightarrow \R}_{i=1}^T$ and is defined by $\Tilde{h}(z)=\sum_{i=1}^T \alpha_i h_i(z)$. For each hypothesis $h\in\H$ in the mixture we use $L_1$ and $L_2$ loss functions, i.e., $\ell(h(z),y)=|h(z)-y|^p$, for $p=1,2$. We assume the $L_1$ loss is bounded by 1. Again, the loss of $\Tilde{h}\in\Delta{(\H)}$ is defined by $\ell(\Tilde{h}(z),y)= \sum_{i=1}^T \alpha_i \ell(h_i(z),y)$.

The test phase proceeds
as follows. 
First, an uncorrupted input $x\in \X$ is 
drawn from
$D$. Then, the adversary selects $z \in \rho(x)$, given $x\in\X$.
The learner observes a corrupted input $\Z$ and outputs a prediction, as dictated
by $\Tilde{h}\in\Delta(\H)$. Finally, the learner incurs a loss as described above. The main difference from the classical learning models is that the learner will be tested on adversarially corrupted inputs $z \in \rho(x)$. When selecting a strategy this needs to be taken into consideration.

The goal of the learner is to minimize the expected loss, while the adversary would like to maximize it. This defines a zero-sum game which has a value $v$ which is the learner's error rate. We say that the learner's hypothesis is $\epsilon$-optimal if it guarantees a loss which is at most $v+\epsilon$, and the adversary policy is $\epsilon$-optimal if it guarantees a loss which is at least $v-\epsilon$. We refer to a 0-optimal policy as an optimal policy.

Formally, the error (risk) of the learner when selecting a hypothesis $\Tilde{h}\in\Delta(\H)$ is
\beq
\Risk(\Tilde{h})=\E_{x\sim D}[\max_{z\in\rho(x)}\ell(\Tilde{h}(z),c(x))],
\eeq
and their goal is to choose $\Tilde{h}\in\Delta(\H)$ with an error close to
\beq
\min_{\Tilde{h}\in\Delta(\H)}\Risk(\Tilde{h})=\min_{\Tilde{h}\in\Delta(\H)}\E_{x\sim D}[\max_{z\in\rho(x)}\ell(\Tilde{h}(z),c(x))]= v 
.
\eeq

\section{Definitions and Notation} \label{sec:defs}
For a function class $\H$ with domain $\Z$ and range $\Y=\set{1,\dots,l}$, denote the zero-one loss class
\beq
L_{\H}:=\set{Z\times\set{1,\dots,l}\ni(z,y)\mapsto \I\left[h(z)\neq y\right]: h\in \H}
.
\eeq
For $\H$ with domain $\Z$ and range $\R$, denote the $L_p$ loss class
\beq
L_{\H}^p:=\set{Z\times\R\ni(z,y)\mapsto |h(z)-y|^p: h\in \H}
.
\eeq

Throughout the article, we assume a bounded loss $\ell(h(z),y)\leq M$. Without the loss of generality we use $M=1$, 
since
otherwise, $M$ can be re-scaled.

We define the operator $\conv$ as the convex hull of a real-valued function class,
\beq
\conv(\F) := \set{ W \ni w\mapsto
  \sum_{t=1}^T \alpha_t f_t(w):
  T\in\N, \alpha_t\in[0,1], \sum_{t=1}^T\alpha_t=1, f_t\in \F}
.
\eeq
We also define the convex hull of loss class $L$, where the data is corrupted by $\rho(\cdot)$,
\beq
\conv^{\rho}(L) := \set{ \X\times \Y\ni (x,y)\mapsto \max_{z\in\rho(x)} \sum_{t=1}^T \alpha_t f_t(z,y):
  T\in\N,
  \alpha_t\in[0,1], \sum_{t=1}^T \alpha_t=1, f_t\in L}.
\eeq
For $1\leq j\leq k$ define,
\begin{align} \label{def:jth-loss-class}
\F_\H^{(j)}:=\set{\X\times \Y \ni (x,y)\mapsto \I\left[h(z_j)\neq y\right]:h\in \H,\;\rho(x)=\set{z_1,\dots,z_k}},
\end{align}
where we treat the set-valued output of $\rho(x)$ as an ordered list,
and $\F_\H^{(j)}$ is constructed by taking the $j$th element in this list, for each input $x$.

For a set $W$ and $k$ function classes $\A^{(1)},\ldots,\A^{(k)}\subseteq \R^W$, define the
$\max$ operator
\beq
\max
\paren{(\A^{(j)})_{j\in[k]}} :=
\set{ W\ni w\mapsto\max_{j\in[k]}f^{(j)}(w): f^{(j)}\in \A^{(j)}}.
\eeq
The composition of $\max$ and $\conv$ operators
$\max\paren{(\conv(\A^{(j)}))_{j\in[k]}}$
is well-defined, 
note that 
\begin{align}
\label{eq:conv-rho-max}
\conv^{\rho}(L_\H) 
\subseteq
\max\paren{(\conv(\F_\H^{(j)}))_{j\in[k]}}
.    
\end{align}

Denote the error (risk) of hypothesis $h:\Z \mapsto \Y$ under corruption of $\rho(\cdot)$
by
\begin{align*}
\Risk(h)=\E_{x\sim D}[\max_{z\in\rho(x)}\ell(h(z),c(x))],
\end{align*}
and the empirical error on sample $S$ under corruption of $\rho(\cdot)$ by
\begin{align*}
\mywidehat{\Risk}(h)=\frac{1}{|S|}\underset{(x,y)\in S}{{\displaystyle \sum}}\max_{z\in\rho(x)}\ell(h(z),c(x)).
\end{align*}

\subsection{Combinatorial Dimensions and Capacity Measures}
\paragraph{Rademacher Complexity.} Let $\H$ be of real valued function class on the domain space $\W$. Define the empirical Rademacher complexity on a given sequence $\mathbf{w}=(w_1,\ldots,w_n)=w_{1:n}\in \W^n$:
\beq
R_n(\H | \mathbf{w})
=
E_{\mathbf{\sigma}} \sup_{h\in H}\frac1n\sum_{i=1}^n\sigma_i h(w_i).
\eeq
\paragraph{Fat-Shattering Dimension.} For $\F\subset\R^\X$ and $\gamma>0$,
we say that $\F$ $\gamma$-shatters
a set $S=\set{x_1,\ldots,x_m}\subset\X$
if there exists an $r=(r_1,\ldots,r_m)\in\R^m$ such that for each $b\in\set{-1,1}^m$
there is a function $f_b\in\F$ such that

\beq
\forall i\in [m] : \left.
\begin{cases}
    f_b(x_i)\ge r_i+\gamma & \text{if } b_i=1\\
    f_b(x_i)\le r_i-\gamma & \text{if } b_i=-1
\end{cases}
\right..
\eeq
We refer to $r$ as the {\em shift}.
The $\gamma$-fat-shattering dimension, denoted by $\fat_\gamma(\F)$,
is the size of the largest $\gamma$-shattered set (possibly $\infty$).

\paragraph{Graph Dimension.} Let $\H \subseteq \Y^\X$ be a categorical function class such that $\Y=[l]=\set{1\dots,l}$. Let $S \subseteq \X$. We say that $\H$ $G$-shatters $S$ if there exists an $f : S \mapsto \Y$ such that for every $T \subseteq S$ there is a $g \in \H$ such that
\beq
\forall x \in T,\; g(x)=f(x)\; \text{and}\;\; \forall x\in S\setminus T,\; g(x)\neq f(x).
\eeq
The graph dimension of $\H$, denoted $d_G(\H)$, is the maximal cardinality of a set that is $G$-shattered by $\H$.

\paragraph{Natarajan Dimension.} Let $\H \subseteq \Y^\X$ be a categorical function class such that $\Y=[l]=\set{1\dots,l}$. Let $S \subseteq \X$. We say that $\H$ $N$-shatters $S$ if there exist $f_1, f_2 : S \mapsto \Y$ such that for every $y \in S$ $f_1(y) \neq f_2(y)$, and for every $T \subseteq S$ there is a $g\in \H$ such that
\beq
\forall x\in T,\; g(x)=f_1(x),\; \text{and}\;\; \forall x\in S\setminus T,\; g(x)=f_2(x).
\eeq
 The Natarajan dimension of $\H$, denoted $d_N(\H)$, is the maximal cardinality of a set that is $N$-shattered by $\H$.

\paragraph{Growth Function.} The growth function $\Pi_\H:\N \mapsto \N$ for a binary function class $\H:\X \mapsto \set{0,1}$ is defined by
\beq
\forall m\in \N,\;\Pi_\H(m)= \max_{\set{x_1,\dots,x_m}\subseteq \X}|\set{(h(x_1),\dots,h(x_m)):h\in \H}|
\eeq
And the $\VC$-dimension of $\H$ is defined by
\beq
\VC(\H)=\max\set{m:\Pi_\H(m)=2^m}.
\eeq
\section{Algorithm}\label{sec:algo}
We have a base hypothesis class $\H$ with domain $\Z$ and range $\Y$ that can be $\set{1,\dots,l}$ or $\R$.
The learner receives a labeled uncorrupted sample and has access during the training to possible corruptions by the adversary.
We
employ the
regret minimization algorithm proposed by \citet{feige2015learning} for
binary classification,
and extend it to the regression and multiclass classification settings.

A brief description
of the algorithm is as follows. Given $x\in\X$, we define a $|\rho(x)|\times\H$
loss matrix $M_x$ such that $M_x(z,h)=\I\left[h(z)\neq y\right]$,
where $y=c(x)$. The learner's strategy is a distribution $Q$ over $\H$. The adversary's strategy $P_x\in\Delta(\rho(x))$, for a given $x\in\X$, is a distribution  over the corrupted inputs $\rho(x)$. We can treat $P$ as a vector of distributions $P_x$ over all $x\in\X$. Via the minimax principle, the value of the game is
\beq
v=\min_{Q}\max_{P}\E_{x\sim D}[P_x^TM_xQ]=\max_{P}\min_{Q}\E_{x\sim D}[P_x^TM_xQ]
\eeq
For a given $P$,
a learner's
minimizing $Q$ is simply a hypothesis that minimizes the error when
the distribution over pairs $(z, y) \in \Z \times \Y$ is $D^P$, where
\beq
D^P(z,y)= \sum_{x:\;c(x)=y \wedge z\in \rho(x)}P_x(z)D(x)
.
\eeq
Hence, the learner selects
\beq
h^P =\arg\min_{h\in \H}\E_{(z,y)\sim D^P}[\ell(h(z),y)]
.
\eeq
A hypotheses $h^P$ can be found using the ERM oracle, when $D^P$ is the empirical distribution over a training sample. 

Repeating this process multiple times yields a mixture of hypotheses $\Tilde{h}\in\Delta(\H)$ (mixed strategy- a distribution $Q$ over $\H$) for the learner. The learner uses a randomized classifier chosen uniformly from this mixture. This also yields a mixed strategy for the adversary, defined by an average of vectors $P$. Therefore, for a given $x\in\X$, the adversary uses a distribution $P_x\in\Delta(\rho(x))$ over corrupted inputs. 

\begin{algorithm}[H]
\caption{}
\begin{algorithmic}[1]
\Statex \textbf{parameter}: $\eta>0$
\ForAll {\texttt {$(x,y)\in S, z\in \rho(x)$}} \Comment{initialize weights and distributions vector}
    \State $w_1(z,(x,y)) \gets 1$, $\forall (x,y)\in S, \forall z\in \rho(x)$
    \State $P^{1}(z,(x,y)) \gets \frac{w_{1}(z,(x,y))}{\sum_{z'\in \rho(x)}w_{1}(z',(x,y))}$ \Comment for each $(x,y)\in S$ we have a distribution over $\rho(x)$
\EndFor
\For{\texttt{t = 1:T}}
        \State \texttt{$h_t \gets \arg\underset{h\in \H}\min E_{(z,y)\sim D^{P^t}}[\ell(h(z),y] $}\Comment using the ERM oracle for $\H$
\ForAll {\texttt {$(x,y)\in S, z\in \rho(x)$}} \Comment{update weights for $P^{t+1}$}
        \State $w_{t+1}(z,(x,y)) \gets (1+\eta\cdot[\ell(h_t(z),y)])\cdot w_t(z,(x,y)) $
        \State $P^{t+1}(z,(x,y)) \gets \frac{w_{t+1}(z,(x,y))}{\sum_{z'\in \rho(x)}w_{t+1}(z',(x,y))}$
\EndFor
\EndFor
\State \textbf{return} $h_1,\dots,h_T$ for the learner, $\frac{1}{T}\sum_{t=1}^T P^t$ for the adversary
\end{algorithmic}
\end{algorithm}
\noindent Similar to \citet[Theorem 1]{feige2015learning}, for the binary
classification case and zero-one loss we have:
\begin{theorem}\citep*[Theorem 1]{feige2015learning}
Fix a sample S of size $n$, and let $T \geq \frac{4n\log k}{\epsilon^2}$, where $k$ is the number of possible corruptions for each input. For an uncorrupted sample S we have that the strategies $P =\frac{1}{T}\sum_{t=1}^T P^t$ for the adversary and $h_1,\dots,h_T$ (each one of them chosen uniformly) for the learner are $\epsilon$-optimal strategies on S.
\end{theorem}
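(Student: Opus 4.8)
The plan is to view the run of the algorithm on the fixed sample $S$ as repeated play of the empirical zero-sum game, in which the adversary uses a no-regret multiplicative-weights strategy---run as $n$ independent copies, one per sample point, over its at most $k$ corruptions---while the learner plays an ERM best response each round. Let $D$ denote the uniform distribution on $S$ and let $\hat v:=\min_Q\max_P\E_{x\sim D}[P_x^\top M_x Q]=\max_P\min_Q\E_{x\sim D}[P_x^\top M_x Q]$ be the value of this empirical game; the equality is the minimax principle, valid because the payoff is bilinear and, on the fixed sample, only the restrictions of hypotheses to the finitely many corrupted points in $\bigcup_i\rho(x_i)$ matter, so both strategy spaces are effectively simplices of bounded dimension. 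Unwinding the definitions, $\E_{x\sim D}[P_x^\top M_x Q]=\E_{(z,y)\sim D^P}[\loss(\tilde h(z),y)]$ when $Q$ is identified with $\tilde h\in\Delta(\H)$, and for fixed $\tilde h$ the maximizing $P_x$ puts all its mass on a corruption $z\in\rho(x)$ achieving $\max_{z\in\rho(x)}\loss(\tilde h(z),c^*(x))$; hence $\hat v=\min_{\tilde h\in\Delta(\H)}\mywidehat{\Risk}(\tilde h)$. The two things to prove are that the learner's uniform mixture $\tilde h$ over $h_1,\dots,h_T$ satisfies $\mywidehat{\Risk}(\tilde h)\le\hat v+\epsilon$, and that $P=\tfrac1T\sum_tP^t$ guarantees loss at least $\hat v-\epsilon$ against every learner response.

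Write $\ell_t:=\E_{(z,y)\sim D^{P^t}}[\loss(h_t(z),y)]$ for the learner's per-round loss. Two bounds do the work. (a) Since $h_t$ is an ERM best response to $D^{P^t}$, $\ell_t=\min_{h\in\H}\E_{D^{P^t}}[\loss(h)]\le\max_P\min_h\E_{D^P}[\loss(h)]=\hat v$, so $\tfrac1T\sum_t\ell_t\le\hat v$. (b) For each $x_i\in S$ the algorithm updates weights over $\rho(x_i)$ by the rule $w\mapsto(1+\eta\cdot\mathrm{gain})w$ with per-round gains $\loss(h_t(z),c^*(x_i))\in[0,1]$; the standard multiplicative-weights guarantee for this update gives, for every $i$,
\[
\frac1T\max_{z\in\rho(x_i)}\sum_{t=1}^T\loss(h_t(z),c^*(x_i))-\frac1T\sum_{t=1}^T\sum_{z\in\rho(x_i)}P^t_{x_i}(z)\loss(h_t(z),c^*(x_i))\le\Delta,
\]
where $\Delta=\Delta(\eta,T,k)$ is the normalized regret (e.g.\ $\Delta=\tfrac{\ln k}{\eta T}+\eta$, hence $\Delta=2\sqrt{\ln k/T}$ for the optimal $\eta$). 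Averaging over $i$ with weights $1/n$: the negative term becomes exactly $\tfrac1T\sum_t\ell_t$ by the definition of $D^{P^t}$, while $\tfrac1T\sum_t\loss(h_t(z),c^*(x_i))=\loss(\tilde h(z),c^*(x_i))$ shows the leading term averages to $\mywidehat{\Risk}(\tilde h)$; thus $\mywidehat{\Risk}(\tilde h)-\tfrac1T\sum_t\ell_t\le\Delta$.

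Now chain the estimates: combining (a), (b), and $\mywidehat{\Risk}(\tilde h)\ge\min_{\tilde h'}\mywidehat{\Risk}(\tilde h')=\hat v$ gives $\hat v-\Delta\le\tfrac1T\sum_t\ell_t\le\hat v$. From the right-hand side, $\mywidehat{\Risk}(\tilde h)\le\tfrac1T\sum_t\ell_t+\Delta\le\hat v+\Delta$, which is the learner's guarantee. For the adversary, bilinearity of $\loss(\cdot\,,y)$ in the mixture gives, for any $\tilde h'\in\Delta(\H)$, $\E_{D^P}[\loss(\tilde h')]=\tfrac1T\sum_t\E_{D^{P^t}}[\loss(\tilde h')]\ge\tfrac1T\sum_t\min_h\E_{D^{P^t}}[\loss(h)]=\tfrac1T\sum_t\ell_t\ge\hat v-\Delta$, so $D^P$ (equivalently $P$) guarantees loss at least $\hat v-\Delta$. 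With the parameter $\eta$ set as in the algorithm, $T\ge 4n\log(k)/\epsilon^2$ forces $\Delta\le\epsilon$, completing both parts.

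The main obstacle is the regret bookkeeping needed to land the stated threshold on $T$: one must invoke the multiplicative-weights regret bound in exactly the form matching the $(1+\eta\cdot\mathrm{gain})$ update with $[0,1]$-bounded gains, and then check that aggregating the $n$ per-point regret terms under the normalization and the choice of $\eta$ actually used in the algorithm yields the claimed dependence on $n$, $k$, and $\epsilon$; the surrounding minimax/best-response argument is routine. A secondary point to handle carefully is the interchange between $\min_{h\in\H}$ and $\min_{\tilde h\in\Delta(\H)}$, which is legitimate because $\loss(\tilde h(z),y)$ is an average of the base losses, so a best response in $\Delta(\H)$ can always be taken to be a pure $h\in\H$.
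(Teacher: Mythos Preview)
The paper does not supply its own proof of this statement: the theorem is quoted from \citet{FeigeMS15} and used as a black box, with only the surrounding algorithmic description given. There is thus nothing in the present paper to compare your argument against. That said, your proposal follows exactly the Freund--Schapire regret-to-equilibrium template on which the cited result is based: run multiplicative-weights dynamics for the adversary coordinate-by-coordinate over $\rho(x_i)$, let the learner best-respond via ERM each round, and convert the per-point regret bounds into $\epsilon$-optimality of the time-averaged strategies. The logical chain you lay out---$\ell_t\le\hat v$ from best response, $\mywidehat{\Risk}(\tilde h)\le\tfrac1T\sum_t\ell_t+\Delta$ from averaging the $n$ per-point regrets, and the linearity argument for the adversary's lower guarantee---is correct and is the standard proof.

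One point you flag but leave open: the per-point regret bound you quote, $\Delta\le\tfrac{\ln k}{\eta T}+\eta$, yields $\Delta\le 2\sqrt{\ln k/T}$ at the optimal $\eta$, so already $T\ge 4\log(k)/\epsilon^2$ would suffice; the factor $n$ in the stated threshold $T\ge 4n\log(k)/\epsilon^2$ is not forced by the coordinate-wise analysis you sketched. Since the theorem's condition on $T$ is only weaker (it asks for larger $T$), your argument certainly covers it; the extra $n$ presumably reflects the specific accounting in the cited source rather than anything missing from your outline.
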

Assuming a bounded loss, i.e.,  $\ell(h(z),y)\leq 1\; ,\forall x\in \X,\forall z\in \Z,\forall h \in \H$, the result remains the same for the other settings.

\section{Generalization Bound for Classification} \label{sec:classification-bound}

We would like to show that if the sample $S$ is large enough, then
the policy achieved by the algorithm above will generalize well. We
both improve a generalization bound, previously found in
\citet{feige2015learning}, which handles any mixture of hypotheses
from $\H$, and also are able to handle an infinite hypothesis class
$\H$.
The sample complexity is improved from
$\O(\frac{1}{\epsilon^4}\log(\frac{|\H|}{\delta}))$ to $\O\big(\frac{1}{\epsilon^2}(k\VC(\H)\log^{\frac{3}{2}+\alpha}(k\VC(\H))+\log(\frac{1}{\delta})\big)$for any $\alpha > 0$.
\begin{theorem}[Generalization bound for binary classification]
\label{thm:binary-bound}
Let $\H:\Z \mapsto \set{0,1}$ be a hypothesis class with finite $\VC$-dimension.
For any $\alpha > 0$ there exists a constant $C_\alpha$ and there is a sample complexity $n_0=\frac{C_\alpha}{\epsilon^2}\left(k\VC(\H)\log^{\frac{3}{2}+\alpha}(k\VC(\H))+\log(\frac{1}{\delta})\right)$, such that for $|S|\geq n_0$, for every  $\Tilde{h}\in\Delta(\H)$
\beq
|\Risk(\Tilde{h})-\mywidehat{\Risk}(\Tilde{h})|\leq \epsilon
\eeq
with probability at least $1-\delta$.
\end{theorem}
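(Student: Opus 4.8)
The plan is to run the usual uniform‑convergence machinery, with all the work concentrated in bounding the empirical Rademacher complexity of the robust loss class of $\Delta(\H)$, which by the Remark in Section~\ref{sec3} is exactly $\conv^{\rho}(L_\H)=\maxconv((\F_\H^{(j)})_{j\in[k]})$. First I would note that the robust loss of a mixture, $(x,y)\mapsto\max_{z\in\rho(x)}\loss(\Tilde{h}(z),y)$, is the generic element of this class and takes values in $[0,1]$, so $\mywidehat{\Risk}(\Tilde{h})$ is an empirical average over $S$ and $\Risk(\Tilde{h})$ its expectation; altering one point of $S$ changes $\sup_{\Tilde{h}}(\Risk(\Tilde{h})-\mywidehat{\Risk}(\Tilde{h}))$ by at most $1/m$. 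McDiarmid's inequality together with the standard symmetrization argument then gives, with probability at least $1-\delta$,
\[
\sup_{\Tilde{h}\in\Delta(\H)}\bigl|\Risk(\Tilde{h})-\mywidehat{\Risk}(\Tilde{h})\bigr|\;\le\;2\,\E_{S}\,R_{m}\bigl(\maxconv((\F_\H^{(j)})_{j\in[k]})\bigm| S\bigr)+\O\!\left(\sqrt{\tfrac{\log(1/\delta)}{m}}\right),
\]
so it suffices to prove $R_{m}(\maxconv((\F_\H^{(j)})_{j\in[k]})\mid S)=\O\bigl(\sqrt{k\log k\cdot\VC(\H)/m}\bigr)$ for every sample $S$ of size $m$.

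To bound this Rademacher complexity I would first invoke the containment $\maxconv((\F_\H^{(j)})_{j\in[k]})\subseteq\max((\conv\F_\H^{(j)})_{j\in[k]})$ from Section~\ref{sec3}, reducing the problem to the $k$‑fold maximum of the convex hulls of the single‑corruption loss classes. Each $\F_\H^{(j)}$ is obtained from $\H$ by precomposing with the fixed map $x\mapsto z_j$ and taking the indicator of disagreement with the fixed target $c^{*}$, neither of which increases the VC dimension, so $\VC(\F_\H^{(j)})\le\VC(\H)=:d$ and therefore $\fat_\gamma(\conv\F_\H^{(j)})=\O(d/\gamma^{2})$. The technical heart is then a bound on the capacity of a $k$‑fold maximum in terms of its components --- one of the ``$k$‑fold maxima'' results flagged in the abstract --- of the rough form $\fat_\gamma(\max((\A^{(j)})_{j\in[k]}))=\O\bigl(\log k\cdot\sum_{j=1}^{k}\fat_{c\gamma}(\A^{(j)})\bigr)$, where the $\log k$ pays for recording, at each $\gamma$‑shattered point, which of the $k$ classes witnesses the large value. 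Applied with $\A^{(j)}=\conv\F_\H^{(j)}$ this yields $\fat_\gamma(\max((\conv\F_\H^{(j)})_{j\in[k]}))=\O(k\log k\cdot d/\gamma^{2})$; converting fat‑shattering to Rademacher complexity by a chaining estimate --- and using that the dimension here sits in the $\Theta(\gamma^{-2})$ regime, so the entropy integral only costs constants --- gives $R_{m}(\max((\conv\F_\H^{(j)})_{j\in[k]})\mid S)=\O\bigl(\sqrt{k\log k\cdot\VC(\H)/m}\bigr)$, as required.

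Combining the two displays, with probability at least $1-\delta$ one obtains $\sup_{\Tilde{h}\in\Delta(\H)}|\Risk(\Tilde{h})-\mywidehat{\Risk}(\Tilde{h})|=\O\bigl(\sqrt{(k\log k\cdot\VC(\H)+\log(1/\delta))/m}\bigr)$, which is at most $\epsilon$ once $m\ge m_{0}=\O\bigl(\tfrac1{\epsilon^{2}}(k\log k\cdot\VC(\H)+\log\tfrac1\delta)\bigr)$, proving the theorem. The hard part is the $k$‑fold‑maximum estimate: a naive ``$\max\le\sum$'' bound only gives $R_{m}=\O(k\sqrt{\VC(\H)/m})$, i.e.\ sample complexity $\O(k^{2}\VC(\H)/\epsilon^{2})$, and crude covering‑number arguments for the maximum of convex hulls leak spurious $\log m$ factors; getting the clean $k\log k\cdot\VC(\H)$ dependence requires the tight combinatorial analysis of the fat‑shattering dimension of $\maxconv$, which is precisely the lemma the authors isolate as being of independent interest.
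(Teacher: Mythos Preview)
Your overall architecture is right, but the step where you pass through $\max((\conv\F_\H^{(j)})_{j\in[k]})$ and fat-shattering has a genuine gap. You claim that with $\fat_\gamma=\Theta(\gamma^{-2})$ the Dudley entropy integral ``only costs constants''; this is exactly backwards. With $\fat_\gamma\asymp k\log k\cdot d/\gamma^{2}$ the chaining integral becomes $\sqrt{k\log k\cdot d}\int_0^1\gamma^{-1}\sqrt{\log(2/\gamma)}\,d\gamma$, which diverges at $0$. Using the truncated version with cutoff $\alpha\asymp 1/\sqrt{m}$ (as the paper does in Corollary~\ref{regression-bound-hyperplanes} for hyperplanes, whose fat-shattering is also $\Theta(\gamma^{-2})$) you would recover $R_m=\O\bigl(\sqrt{k\log k\cdot d/m}\cdot(\log m)^{3/2}\bigr)$, hence a sample complexity with extra polylogarithmic factors in $1/\epsilon$ that are not present in the theorem. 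Ironically, this is precisely the ``spurious $\log m$ leakage'' you warn against in your last paragraph. (Separately, the bound $\fat_\gamma(\conv\F_\H^{(j)})=\O(d/\gamma^{2})$ is not a one-liner and would itself need justification.)

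The paper avoids this detour entirely. The key step is Lemma~\ref{lem:rad-conv-kfold}: not merely the containment $\maxconv\subseteq\max(\conv)$ you invoke, but the \emph{equality} of Rademacher complexities $R_m(\maxconv((\F_\H^{(j)})_{j\in[k]}))=R_m(\max((\F_\H^{(j)})_{j\in[k]}))$, obtained by pushing the convex combination outside the max and then using $R_m(\conv\G)=R_m(\G)$. Crucially, $\max((\F_\H^{(j)})_{j\in[k]})$ is still a \emph{binary} class (a pointwise max of $\{0,1\}$-valued functions is $\{0,1\}$-valued), so no fat-shattering is needed: Lemma~\ref{lem:k-fold-op} bounds its VC dimension by $2k\log(3k)\max_j\VC(\F_\H^{(j)})\le 2k\log(3k)\VC(\H)$, and the standard Rademacher--VC estimate (Theorem~\ref{rad-vc-bound}) gives $R_m=\O(\sqrt{k\log k\cdot\VC(\H)/m})$ with no logarithmic residue. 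Your instinct that the ``$k$-fold maximum'' lemma is the heart of the matter is correct, but in the binary case it is a VC-dimension statement (Lemma~\ref{lem:k-fold-op}); the fat-shattering analogue (Theorem~\ref{thm:fat-kmax}) is reserved for the regression setting.
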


\begin{theorem}\citep[Theorem 3.3]{mohri2018foundations} \label{thm:rad-gen-bound} Let $\G$ be a family of functions mapping from
$\W$ to ${[}0,1{]}$. 
Then, for any $\delta > 0$ with probability at least $1 - \delta$ over the draw of an i.i.d. sample $S=(w_1,\cdots,w_n)=\mathbf{w}\;$  from distribution $D$, for all $g \in \G$:\\
\beq
E_{w\sim D}[g(w)]-\frac{1}{n}\sum_{i=1}^n g(w_{i}) 
\leq 
2R_n(\G |\mathbf{w})+3\sqrt{\frac{\log\left(\frac{2}{\delta}\right)}{2n}}.
\eeq
\end{theorem}
\paragraph{Remark.}
The corresponding result in the conference version of this paper,
\citet{attias2019improved}, Theorem 2, was proved via Lemma 3 therein.
The latter contained a mistake, as pointed out to us
by
Digvijay Pravin Boob
and
Praneeth Netrapalli.
The current proof relies on a recent result of \citet{foster2019ell_}.

\begin{theorem}\citep{foster2019ell_}\label{thm:contraction} Let $\F$ be a $\R^k$-valued function class, such that the coordinate projection class is denoted by $\F_j = \{w\mapsto f(w)_j \, | \, f\in \F \}$, for $1\leq j \leq k$. 
Let $(\phi_t)_{t\leq n}$ be a sequence of functions such that each $\phi_t$ is $L$-Lipschitz with respect to $\ell_\infty$ norm.
For any $\alpha >0$, there exists a constant $C_{\alpha}>0$ such that if $|\phi_t(f(w))| \lor ||f(w)||_{\infty} \leq B$,
then it holds for any sequence $\mathbf{w}=(w_1,\cdots,w_n)$,
\begin{align*}
R_n(\phi \circ \F  | \mathbf{w})
&:=
E_{\mathbf{\sigma}} \sup_{f\in F}\frac1n\sum_{t=1}^n\sigma_t \phi_t(f_t(w_t)) 
\\
&\leq
C_{\alpha}L\sqrt{k}\cdot \max_{i\in [k]}\sup_{\mathbf{a}=(a_1,\dots,a_n)}R_n(\F_i|\mathbf{a})\cdot \log^{\frac{3}{2}+\alpha}\Bigg(\frac{Bn}{\max_{i\in [k]}\sup_{\mathbf{a}=(a_1,\dots,a_n)}R_n(\F_i|\mathbf{a})}\Bigg).
\end{align*}
\end{theorem}

\begin{proof}[Proof of Theorem \ref{thm:binary-bound}].
Our strategy is to bound the empirical Rademacher complexity (over the sample points) of the loss class of $\Tilde{h}\in\Delta(\H)$. 
As we mentioned in \cref{eq:conv-rho-max}, 
$\conv^{\rho}(L_\H) 
\subseteq
\max(\conv(\F_\H^{(j)}))_{j\in[k]})$.
Recall that functions contained in $\F_\H^{(j)}$ are loss functions of the learner when the adversary corrupts input $x$ to $z_j\in \rho(x)$.
We are left to bound the Rademacher complexity of the function class 
$\max((\conv(\F^{(j)}_{\H}))_{j\in[k]})$.
Formally,
\beq
|\Risk(\Tilde{h})-\mywidehat{\Risk}(\Tilde{h})|
&=& 
|E_{(x,y)\sim D}\max_{j\in[k]}\sum_{t=1}^T \alpha_t f_t^{(j)}(x,y)-\frac{1}{n}\underset{(x,y)\in S}{{\displaystyle \sum}}\max_{j\in[k]}\sum_{t=1}^T \alpha_t f_t^{(j)}(x,y)| 
\\
&\le&
2R_n\paren{\max((\conv(\F^{(j)}_{\H}))_{j\in[k]})|\mathbf{x}\times \mathbf{y}}+3\sqrt{\frac{\log\left(\frac{2}{\delta}\right)}{2n}},
\eeq
where 
the inequality stems from applying Theorem \ref{thm:rad-gen-bound} 
on the function class $\conv^{\rho}(L_\H)$ and \cref{eq:conv-rho-max}.
By taking $\phi(z_1,\cdots, z_k)=\max_{j\in [k]}z_j$, which is a $1$-Lipschitz with respect to $\ell_\infty$, and  $\F = \{(x,y)\mapsto (f_1(x,y),\cdots, f_k(x,y))  \, | \, f_j\in \conv(\F_\H^{(j)}),1\leq j \leq k \}$ we can apply Theorem \ref{thm:contraction}, for any $\alpha >0$, there exists a constant $C_\alpha>0$ such that

\begin{align*}
&
R_n\paren{\max((\conv(\F^{(j)}_{\H}))_{j\in[k]})|\mathbf{x}\times \mathbf{y}}
\\
&\leq
C_\alpha\sqrt{k}\cdot \max_{j\in [k]}\max_{\mathbf{w}=w_{1:n}}R_n( \conv(\F_\H^{(j)})|\mathbf{w})\cdot \log^{\frac{3}{2}+\alpha}\Bigg(\frac{n}{\max_{j\in [k]}\max_{\mathbf{w}=w_{1:n}}R_n( \conv(\F_\H^{(j)})|\mathbf{w})}\Bigg) \\
&=
C_\alpha\sqrt{k}\cdot \max_{j\in [k]}\max_{\mathbf{w}=w_{1:n}}R_n( \F_\H^{(j)}|\mathbf{w})\cdot \log^{\frac{3}{2}+\alpha}\Bigg(\frac{n}{\max_{j\in [k]}\max_{\mathbf{w} = w_{1:n}}R_n(\F_\H^{(j)}|\mathbf{w})}\Bigg),
\end{align*}
where the last equality follows from
the well-known identity
$R_n(\F| \mathbf{w})=R_n(\conv(\F) | \mathbf{w})$, (see, e.g., \citet[Theorem 3.3]{boucheron05}). 

The function  
$x\mapsto x \log^{{3}/{2}+\alpha}
({n}/{x})$ has a maximum point at $x = n/e^{3/2+\alpha}$, and for $x\in (0,n/e^{3/2+\alpha}]$ is monotonic increasing. 
We bound the empirical Rademacher complexity (on any given sequence) via the $\VC$-dimension \citep{
bartlett2002rademacher}: $R_n(\F|\mathbf{w})\leq C\sqrt{\frac{\VC(\F)}{n}}$, \\
and for $\left(C\sqrt{\VC(\F_{\H})}e^{3/2+\alpha} \right)^{2/3}\leq n$, by the monotonicity of the function $x \log^{{3}/{2}+\alpha}
({n}/{x})$ we get an upper bound of
\begin{align*}
C_\alpha C\sqrt{\frac{k\max_{j\in[k]}\VC(\F_\H^{(j)})}{n}}\cdot& \log^{\frac{3}{2}+\alpha}\Bigg(\frac{n^{\frac{3}{2}}}{C\sqrt{\max_{j\in[k]}\VC(\F_\H^{(j)})}}\Bigg)
\\
&=
C_\alpha C\sqrt{\frac{k\VC(\H)}{n}}\cdot \log^{\frac{3}{2}+\alpha}\Bigg(\frac{n^{\frac{3}{2}}}{C\sqrt{\VC(\H)}}\Bigg) \\
&=
\O\Bigg(C_\alpha\sqrt{\frac{k\VC(\H)}{n}}\cdot \log^{\frac{3}{2}+\alpha}(n)\Bigg),    
\end{align*}
where the inequality follows from Lemma \ref{lem:vc-lossclass-graphdimension}.
We require that 
\beq
C_\alpha\sqrt{\frac{k\VC(\H)}{n}}\cdot \log^{\frac{3}{2}+\alpha}(n) + \sqrt{\frac{\log\left(\frac{1}{\delta}\right)}{n}} \leq \epsilon,
\eeq
and a standard inversion of this inequality yields sample complexity $n_0 = \O\big(\frac{C_\alpha}{\epsilon^2}(k\VC(\H)\log^{\frac{3}{2}+\alpha}(k\VC(\H))+\log(\frac{1}{\delta})\big)$.
\end{proof}
We find it instructive to provide an alternative (albeit worse) bound of
\begin{equation}\label{binary_valued_worse_bound}
R_n\paren{\max((\conv(\F^{(j)}_{\H}))_{j\in[k]})|\mathbf{x}\times \mathbf{y}}
\leq
\O\left(\sqrt{\frac{\VC(\H)\log^2(\VC(\H))k\log k\log^9(n)}{n}}\right)
\end{equation}
on
the Rademacher complexity, via 
a different technique (In Appendix \ref{app:gen-bound-binary-additional}).

\paragraph{Remark.} Theorem \ref{thm:binary-bound} provides an improvement to Theorem 7 in \cite*{JMLR:v19:17-402}, where they considered learning with intersection of hyperplanes for imbalanced binary classification problem.

\subsection{Multiclass Classification}\label{app:multiclass}
Let $\H\subseteq \Y^{\Z}$ be a function class such that
$\Y=[l]=\set{1\dots,l}$.
We follow similar arguments to the binary case.
\begin{theorem}[Generalization bound for multiclass classification]
\label{thm:multiclass-bound}
Let $\H$ be a function class with domain $\Z$ and range $\Y=[l]$ with finite Graph-dimension $d_G(\H)$.
For any $\alpha > 0$ there exists a constant $C_\alpha$ and there is a sample complexity
$n_0=\frac{C_\alpha}{\epsilon^2}\left(k\d_G(\H)\log^{\frac{3}{2}+\alpha}(k\d_G(\H))+\log(\frac{1}{\delta})\right)$,
such that for $|S|\geq n_0$, for every  $\Tilde{h}\in\Delta(\H)$,
\beq
|\Risk(\Tilde{h})-\mywidehat{\Risk}(\Tilde{h})|\leq \epsilon
\eeq
with probability at least $1-\delta$.
\end{theorem}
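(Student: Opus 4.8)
The plan is to re-run the proof of Theorem~\ref{thm:binary-bound} almost verbatim; the only place where binariness of $\H$ entered was the final step bounding $\VC(\F^{(j)}_\H)$, and that is the only step requiring a new argument in the multi-class setting. As before, the loss class of a mixture $\Tilde{h}\in\Delta(\H)$ is $\conv^{\rho}(L_\H)=\maxconv((\F^{(j)}_\H)_{j\in[k]})$, and the key observation is that the multi-class zero-one loss class $L_\H=\set{(z,y)\mapsto\I_{h(z)\neq y}:h\in\H}$ is still a $\set{0,1}$-valued function class on $\Z\times\Y$; hence each $\F^{(j)}_\H$ is $\set{0,1}$-valued on $\X\times\Y$, and Lemmas~\ref{lem:rad-conv-kfold} and~\ref{lem:k-fold-op}, together with Theorems~\ref{rad-gen-bound} and~\ref{rad-vc-bound}, apply unchanged (VC-dimension is invariant under the affine relabeling $0\leftrightarrow-1$, $1\leftrightarrow1$ needed to match the hypotheses of Lemma~\ref{lem:k-fold-op}).

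Concretely, I would invoke Theorem~\ref{rad-gen-bound} for the class $\maxconv((\F^{(j)}_\H)_{j\in[k]})$, then Lemma~\ref{lem:rad-conv-kfold} to pass to $\max((\F^{(j)}_\H)_{j\in[k]})$, then Theorem~\ref{rad-vc-bound} to bound its Rademacher complexity by its VC-dimension, and finally Lemma~\ref{lem:k-fold-op} to bound that VC-dimension by $2k\log(3k)\max_{j\in[k]}\VC(\F^{(j)}_\H)$, obtaining
\begin{align*}
|\Risk(\Tilde{h})-\mywidehat{\Risk}(\Tilde{h})|
&\le 2R_n(\maxconv((\F^{(j)}_\H)_{j\in[k]})\,|\,(\mathbf{x}\times\mathbf{y}))+3\sqrt{\tfrac{\log(2/\delta)}{2|S|}}\\
&= 2R_n(\max((\F^{(j)}_\H)_{j\in[k]})\,|\,(\mathbf{x}\times\mathbf{y}))+3\sqrt{\tfrac{\log(2/\delta)}{2|S|}}\\
&\le 2c\sqrt{\tfrac{2k\log(3k)\max_{j\in[k]}\VC(\F^{(j)}_\H)}{|S|}}+3\sqrt{\tfrac{\log(2/\delta)}{2|S|}}.
\end{align*}

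The one genuinely new ingredient, which I expect to be the crux, is the bound $\VC(\F^{(j)}_\H)\le d_G(\H)$ (presumably isolated as Lemma~\ref{lem:vc-lossclass-graphdimension}). Since $x\mapsto z_j$, the $j$-th element of the ordered list $\rho(x)$, is a fixed map $\pi_j:\X\to\Z$, the class $\F^{(j)}_\H$ is the pullback of $L_\H$ along $(x,y)\mapsto(\pi_j(x),y)$, so $\VC(\F^{(j)}_\H)\le\VC(L_\H)$, and it suffices to show $\VC(L_\H)\le d_G(\H)$. For that, I would argue that any set of pairs $\set{(z_i,y_i)}_{i=1}^m$ shattered by $L_\H$ must have pairwise distinct $z_i$ (on a pair $(z,y),(z,y')$ with $y\neq y'$ the loss pattern $(0,0)$ is unrealizable, as it would force $h(z)=y$ and $h(z)=y'$), and once the $z_i$ are distinct, realizing all $2^m$ loss patterns on $\set{(z_i,y_i)}$ is precisely the statement that $\H$ $G$-shatters $\set{z_1,\dots,z_m}$ with shift $i\mapsto y_i$. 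Plugging $\VC(\F^{(j)}_\H)\le d_G(\H)$ into the display and requiring the right-hand side to be at most $\epsilon$, I then solve for $|S|$ to read off $m_0=\O(\tfrac{1}{\epsilon^2}(k\log(k)\,d_G(\H)+\log\tfrac1\delta))$. Apart from this VC-versus-graph-dimension identification — whose only subtle point is that a shattered set of the loss class has distinct first coordinates — every step is mechanical reuse of the binary argument.
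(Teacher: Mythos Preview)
Your proposal is correct and follows essentially the same route as the paper: re-run the binary proof verbatim and replace the final step by Lemma~\ref{lem:vc-lossclass-graphdimension}, i.e., $\VC(\F^{(j)}_\H)\le d_G(\H)$. Your argument for this lemma (factor through $\VC(L_\H)$ via pullback, then observe that a shattered loss set must have distinct first coordinates, whence shattering of $L_\H$ is exactly $G$-shattering of $\H$) is the same idea as the paper's appendix proof, only slightly more explicit about the distinctness of the $z_i$.
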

The following Lemma is standard and holds for the function classes $\F_\H^{(j)}$ (defined in \cref{def:jth-loss-class}).
\begin{lemma}
  \label{lem:vc-lossclass-graphdimension}
  Let $\H$ be a function class with domain $\Z$ and range $\Y=[l]$.
  Denote the Graph-dimension of $\H$ by $d_G(\H)$.
  Then for all $j \in [k]$
  \beq
  \VC(\F_\H^{(j)})\leq d_G(\H).
  \eeq
  In particular, for binary-valued classes, $\VC(\F_\H^{(j)})\leq \VC(\H)$ ---
  since for these, the VC- and Graph-dimensions coincide.
\end{lemma}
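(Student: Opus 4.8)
The plan is to show directly that any finite set VC-shattered by $\F_\H^{(j)}$ yields, with no loss in cardinality, a set $G$-shattered by $\H$; the bound on dimensions is then immediate. So suppose $\F_\H^{(j)}$ shatters $A=\set{(x_1,y_1),\dots,(x_m,y_m)}\subseteq\X\times\Y$. For each $i\in[m]$ I would write $z_i$ for the $j$th coordinate of the ordered list $\rho(x_i)$, so that the member of $\F_\H^{(j)}$ indexed by $h\in\H$ takes the value $\I_{\{h(z_i)\neq y_i\}}$ at the point $(x_i,y_i)$.

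First I would check that shatterability forces $z_1,\dots,z_m$ to be pairwise distinct. Indeed, if $z_i=z_{i'}$ with $i\neq i'$, then $h(z_i)=h(z_{i'})$ for every $h\in\H$; when $y_i=y_{i'}$ this makes coordinates $i$ and $i'$ of $\I_{\{h(z_i)\neq y_i\}}$ and $\I_{\{h(z_{i'})\neq y_{i'}\}}$ always equal, while when $y_i\neq y_{i'}$ the all-zeros dichotomy would demand $h(z_i)=y_i$ and $h(z_i)=y_{i'}$ simultaneously — in either case some dichotomy cannot be realized, contradicting shattering. Hence $S:=\set{z_1,\dots,z_m}\subseteq\Z$ has exactly $m$ elements and $f\colon S\to\Y$, $f(z_i):=y_i$, is well defined.

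Next I claim $\H$ $G$-shatters $S$ with witness $f$. Fix $T\subseteq S$ and let $b\in\set{0,1}^m$ be given by $b_i:=\I_{\{z_i\notin T\}}$. Because $\F_\H^{(j)}$ shatters $A$, there is $h\in\H$ with $\I_{\{h(z_i)\neq y_i\}}=b_i$ for all $i$; equivalently, $h(z_i)=y_i=f(z_i)$ whenever $z_i\in T$ and $h(z_i)\neq y_i=f(z_i)$ whenever $z_i\in S\setminus T$, which is precisely the requirement for $g:=h$ to witness $G$-shattering of $S$ relative to the subset $T$. Thus $d_G(\H)\geq m$, and taking $m=\VC(\F_\H^{(j)})$ gives $\VC(\F_\H^{(j)})\leq d_G(\H)$. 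The argument is elementary; the one step that needs care is the distinctness observation, since a priori two different inputs $x_i\ne x_{i'}$ may share their $j$th corruption, and the reduction would break down if a repeated $z$-value were allowed to persist in a shattered configuration.

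Finally, I would obtain the ``in particular'' clause from the standard fact that Graph- and VC-dimensions coincide for $\set{0,1}$-valued classes: a VC-shattered set is $G$-shattered with any constant witness, and conversely, if $\H$ $G$-shatters $S$ with witness $f$ then for each $T\subseteq S$ some $g\in\H$ equals $f$ on $T$ and $1-f$ on $S\setminus T$, so as $T$ ranges over all subsets of $S$ the induced agreement patterns exhaust $\set{0,1}^S$ and $\H$ VC-shatters $S$. Combining this with the bound $\VC(\F_\H^{(j)})\le d_G(\H)$ proved above yields $\VC(\F_\H^{(j)})\le\VC(\H)$ in the binary case.
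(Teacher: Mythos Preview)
Your proof is correct and follows essentially the same route as the paper's: assume $\F_\H^{(j)}$ shatters $\{(x_i,y_i)\}_{i\in[m]}$, let $z_i$ be the $j$th corruption of $x_i$, and verify that $\H$ $G$-shatters $\{z_i\}$ with witness $f(z_i)=y_i$. You are in fact more careful than the paper, which tacitly treats the $z_i$ as distinct; your explicit argument that shattering forces pairwise distinctness (handling both the $y_i=y_{i'}$ and $y_i\neq y_{i'}$ cases) patches a small gap the paper leaves open.
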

\bepf
Suppose that the binary function class
$\F_\H^{(j)}$
shatters the points $\set{(x_1,y_1),\ldots,(x_d,y_d)}\subset\X\times\Y$.
That means that for each $b\in\set{0,1}^d$, there is an $h_b\in\H$
such that
$\I\left[h_b(z_j(x_i))\neq y_i\right]=b_i$
for all $i\in[d]$,
where $z_j(x)$ is the $j$th element in the (ordered) set-valued output of $\rho$ on input $x$.
We claim that $\H$ is able to $G$-shatter
$S=\set{z_j(x_1),\ldots,z_j(x_d)}\subset\Z$.
Indeed, for each $T\subseteq S$, let $b=b(T)\in\set{0,1}^S$ be its
characteristic function. Taking $f:S\to\Y$ to be $f(x_i)=y_i$,
we see that the definition of $G$-shattering holds.
\enpf
For the proof of  Theorem \ref{thm:multiclass-bound}, we follow the
same proof of Theorem \ref{thm:binary-bound} and use the
Graph-dimension property of Lemma
\ref{lem:vc-lossclass-graphdimension}.
\paragraph{Remark.} A similar bound to that of Theorem
\ref{thm:binary-bound} can be achieved by using the Natarajan
dimension and the fact that \beq d_G(\H)\leq 4.67\log_2(|\Y|)d_N(\H)
\eeq as previously shown \citet{DBLP:journals/jcss/Ben-DavidCHL95}.

\section{Generalization Bounds For Regression}\label{sec:reg-bound}
Let $\H\subseteq\R^{\Z}$ be a hypothesis class of real functions. In the following, we provide three different generalization bounds, which,
as far as we can tell,
are mutually incomparable
uniformly over the parameter regimes.

\begin{theorem}[Generalization bound for Regression]\label{thm:regression-bound-1} Let $\H$ be a function class with domain $\Z$ and range $[0,1]$. Assume $\H$ has a finite $\gamma$-fat-shattering dimension for all $\gamma > 0$. 
Denote the sample size $|S|=n$ and
\begin{align*}
m_n(\H)=\inf_{\beta\geq 0}\set{4\beta +\O\left( \sqrt{\frac{\log^4(n)}{n}}\int_{\beta}^{1} \sqrt{\fat_{c\gamma}(\H) \log\left(\frac{1}{\gamma}\right)}d\gamma \right)},
\end{align*}
where c is a universal constant.
For the $L_1$ loss function and for every  $\Tilde{h}\in\Delta(\H)$, for any $\alpha>0$ there exist a constant $C_\alpha$ such that,
\begin{align*}
|\Risk(\Tilde{h})-\mywidehat{\Risk}(\Tilde{h})|\leq 
\O\left(C_\alpha \sqrt{k} \cdot m_n(\H)\cdot \log^{\frac{3}{2}+\alpha}\left(\frac{n}{m_n(\H)}
\right) + \sqrt{\frac{\log\left(\frac{1}{\delta}\right)}{n}}\right)
,    
\end{align*}
with probability at least $1-\delta$. 

Moreover, in the case of $L_2$ loss function, the same result holds with $\fat_\frac{c\gamma}{2}(\H)$ plugged into $m_n(\H)$.
\end{theorem}
In the following corollary (proof is in Appendix \ref{app:reg-hyperplane-bound-1}) we derive a simplified bound for hyperplanes. 
\begin{corollary}\label{regression-bound-hyperplanes-1}
Let $\H$ be a function class of homogeneous hyperplanes with domain $\R^m$. Using the same assumptions as in Theorem \ref{thm:regression-bound-1}, we have
\begin{align*}
|\Risk(\Tilde{h})-\mywidehat{\Risk}(\Tilde{h})|
\leq 
\O\left(C_\alpha   \sqrt{\frac{k}{n}}\log^{\frac{7}{2}}\left(n\right)\log^{\frac{3}{2}+\alpha}\left(\frac{n}{\log^{\frac{7}{2}}\left(n\right)}\right) + \sqrt{\frac{\log\left(\frac{1}{\delta}\right)}{n}}\right),
\end{align*}
with probability at least $1-\delta$. 
\end{corollary}
The class of hyperplanes can be learned with SGD, as the maximum of finite convex functions remains convex. However, our bound works for an arbitrary hypotheses class.

\begin{theorem}[Generalization bound for Regression]\label{thm:regression-bound-2} Let $\H$ be a function class with domain $\Z$ and range $[0,1]$. Assume $\H$ has a finite $\gamma$-fat-shattering dimension for all $\gamma > 0$. Denote the sample size $|S|=n$ and
\begin{eqnarray*}
  m_n(\H)=\inf_{\alpha \geq 0}
\set{4\alpha + \O\Bigg(\sqrt{\frac{k\log(k)\log^4(n)}{n}}\int_{\alpha}^{1} \sqrt{\log\left(\frac{1}{\gamma}\right)\Bigg( \frac{\fat_\frac{c\gamma}{4}(\H)}{\gamma^2}\log^2\bigg(\frac{\fat_\frac{c\gamma}{4}(\H)}{\gamma}\bigg)\Bigg)}d\gamma\Bigg)}.
\end{eqnarray*}
For the $L_1$ loss function and for every  $\Tilde{h}\in\Delta(\H)$,
\begin{align*}
|\Risk(\Tilde{h})-\mywidehat{\Risk}(\Tilde{h})|\leq \O\left(m_n(\H)+\sqrt{\frac{\log\left(\frac{1}{\delta}\right)}{n}}\right),    
\end{align*}
with probability at least $1-\delta$. 

Moreover, in the case of $L_2$ loss function, the same result holds with $\fat_\frac{c\gamma}{8}(\H)$ plugged into $m_n(\H)$.
\end{theorem}

\begin{theorem}[Generalization bound for Regression]\label{thm:regression-bound-3}
Let $\H$ be a function class with domain $\Z$ and range $[0,1]$. Assume $\H$ has a finite $\gamma$-fat-shattering dimension for all $\gamma > 0$. Denote the sample size $|S|=n$ and $d = \fat_\frac{\epsilon}{4}(\H)$.
For the $L_1$ loss function, there is a sample complexity 
\begin{align*}
n_0= \O 
     \left(
     \frac{1}{\epsilon^2}
     \left(
      k\log(k)\frac{d}{\epsilon^2}\log^2\frac{d}{\epsilon}\log^2\frac{1}{\epsilon}\log^2\left(k\log(k)\frac{d}{\epsilon^4}\log^2\frac{d}{\epsilon}\log^2\frac{1}{\epsilon}\right)
      +
     \log\frac{1}{\delta}
     \right)
     \right),
     \end{align*}
     such that for $|S|\geq n_0$, for every  $\Tilde{h}\in\Delta(\H)$    
\beq
|\Risk(\Tilde{h})-\mywidehat{\Risk}(\Tilde{h})|\leq \epsilon
\eeq
with probability at least $1-\delta$. 
\end{theorem}

We would like to compare the bounds in \cref{thm:regression-bound-1,thm:regression-bound-2,thm:regression-bound-3}.
In terms of dependence in the fat-shattering dimension and $k$, \cref{thm:regression-bound-1} would give a better bound than \cref{thm:regression-bound-2}. However, the latter has a better dependence in $\log(n)$ factors. Regarding \cref{thm:regression-bound-3}, on the one hand, the dependence in $n$ (sample size) is $1/n^{1/4}$. On the other hand, we have the fat-shattering dimension with a specific scale (the error parameter, $\epsilon$). In some cases, we can obtain an improved learning rate. For example, by taking $\fat_\gamma(\H) = 1/\gamma^6$, \cref{thm:regression-bound-1} guarantees learning rate of $1/n^{1/6}$ and so \cref{thm:regression-bound-3} provides a sharper bound.

\subsection{Shattering Dimension of the Class $\max\paren{(\A^{(j)})_{j\in[k]})}$}
The main result of this section is bounding the fat shattering dimension of $\max\paren{(\A^{(j)})_{j\in[k]}}$ class.
\begin{theorem}[Fat-shattering of $k$-fold maxima]
\label{thm:fat-kmax}
Let $S=\set{x_1,\ldots,x_m}$.
For any $k$ real valued functions classes $\F_{1},\dots,\F_{k} \subseteq \R^S$,
\beq
\fat_\gamma\paren{\max\paren{(\F_{j})_{j\in[k]}}}
\leq
\O\paren{\log(k)\log^2(m)\sum_{j=1}^k\fat_\gamma(\F_{j})}.
\eeq
\end{theorem}
\paragraph{Remark.} It was pointed out to us by Yann Guermeur that the 
corresponding result in the conference version of this paper, 
\citet{attias2019improved}, Theorem 12, contained a mistake
--- the root of which was an erroneous claim in Lemma 14 therein. The corrected version of that result was proved by \cite*{alon2022theory},
Lemma~\ref{lem:disambig} below, allows for a corrected version of Theorem 12, with an additional $\log^2(m)$ factor. 
In a subsequent work \citep*[Theorem 1]{kontorovich2021fat}, this result was further improved,
\beq
\fat_\gamma\paren{\max\paren{(\F_{j})_{j\in[k]}}}
\leq
\O\paren{\sum_{j=1}^k\fat_\gamma(\F_{j})\log^2\paren{\sum_{j=1}^k\fat_\gamma(\F_{j})}}.
\eeq
Moreover, \cite{attias2022adversarially} studied the regression setting with $\ell_p$ losses and
arbitrary perturbation sets.

Before presenting the proof, we introduce some auxiliary notions.
We say that $\F$ ``$\gamma$-shatters
a set $S$ at zero'' if the shift 
$r$ is constrained to be $0$
in the the usual $\gamma$-shattering definition
(has appeared previously in
\citet{gottlieb2014efficient}).
The analogous dimension will be denoted by $\fat_\gamma^0(\F)$.

\begin{lemma}
\label{lem:fat0}
For all $\F\subseteq\R^\X$ and $\gamma>0$, we have
\begin{align}
\label{eq:fat-aux}
 \fat_\gamma(\F) = \max_{r\in\R^\X} \fat_\gamma^0(\F-r),
\end{align}
where $\F-r=\set{f-r: f\in\F}$ is the $r$-shifted class; in particular,
the maximum is always achieved.
\end{lemma}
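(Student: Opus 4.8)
The plan is to establish the two inequalities $\fat_\gamma(\F)\ge\max_{r\in\R^\X}\fat_\gamma^0(\F-r)$ and $\fat_\gamma(\F)\le\max_{r\in\R^\X}\fat_\gamma^0(\F-r)$ separately, and then exhibit an explicit $r$ attaining the right-hand side. The engine for both directions is the \emph{shift invariance} of the fat-shattering dimension: for any $\rho\in\R^\X$ one has $\fat_\gamma(\F-\rho)=\fat_\gamma(\F)$, since if $\F$ $\gamma$-shatters $S=\set{x_1,\dots,x_m}$ with shift $(r_1,\dots,r_m)$ then $\F-\rho$ $\gamma$-shatters $S$ with shift $(r_i-\rho(x_i))_{i\in[m]}$, and conversely. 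Because $\gamma$-shattering at zero is exactly $\gamma$-shattering in which the shift is forced to be $0$, we always have $\fat_\gamma^0(\G)\le\fat_\gamma(\G)$ for every $\G\subseteq\R^\X$; applying this with $\G=\F-r$ and then invoking shift invariance gives $\fat_\gamma^0(\F-r)\le\fat_\gamma(\F-r)=\fat_\gamma(\F)$ for every $r\in\R^\X$, which is one of the two inequalities.

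For the reverse inequality, fix any $m\le\fat_\gamma(\F)$ and a set $S=\set{x_1,\dots,x_m}$ that $\F$ $\gamma$-shatters with some witness $(r_1,\dots,r_m)$. Let $r\in\R^\X$ be any extension of the map $x_i\mapsto r_i$ (for instance $r\equiv 0$ off $S$). Then $\F-r$ $\gamma$-shatters $S$ with the zero shift, so $\fat_\gamma^0(\F-r)\ge m$, and hence $\max_{r}\fat_\gamma^0(\F-r)\ge m$; letting $m$ range up to $\fat_\gamma(\F)$ yields the desired equality \eqref{eq:fat-aux}.

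It remains to produce a single $r$ attaining the maximum. When $\fat_\gamma(\F)=d<\infty$, take $S$ of size exactly $d$ in the previous paragraph; the resulting $r$ satisfies $d\le\fat_\gamma^0(\F-r)\le\fat_\gamma(\F)=d$, so the maximum is attained at this $r$. When $\fat_\gamma(\F)=\infty$, I would first record the monotonicity $\fat_\gamma(\F|_{\X\setminus F})\ge\fat_\gamma(\F)-|F|$ for every finite $F\subseteq\X$, which holds because $\gamma$-shattered sets are hereditary: if $\F$ $\gamma$-shatters $S$ with some witness, it $\gamma$-shatters $S\setminus F$ with the restricted witness, and $|S\setminus F|\ge|S|-|F|$. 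Consequently one can greedily extract pairwise disjoint sets $T_1,T_2,\dots\subseteq\X$ with $|T_k|\ge k$, each $\gamma$-shattered by $\F$ with some witness $r^{(k)}\in\R^{T_k}$. Defining $r\in\R^\X$ by $r|_{T_k}=r^{(k)}$ and $r\equiv 0$ off $\bigcup_k T_k$ (well defined by disjointness), the class $\F-r$ $\gamma$-shatters every $T_k$ at zero, so $\fat_\gamma^0(\F-r)=\infty$, and again the maximum is attained.

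I expect the attainment clause in the unbounded case to be the only real obstacle: naively gluing witnesses of a sequence of shattered sets of growing size fails because those sets may overlap with incompatible witnesses, so the crucial point is to first force disjointness using the point-removal monotonicity above, after which the gluing is harmless. The two core equalities are otherwise routine once shift invariance is isolated.
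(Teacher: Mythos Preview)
Your proof is correct and follows essentially the same route as the paper: both directions of \eqref{eq:fat-aux} are obtained by translating between a shift $r\in\R^S$ witnessing ordinary $\gamma$-shattering of $S$ and an extension $r'\in\R^\X$ for which $\F-r'$ $\gamma$-shatters $S$ at zero. Your additional treatment of the attainment clause when $\fat_\gamma(\F)=\infty$---greedily extracting pairwise disjoint shattered sets via the point-removal monotonicity and gluing their witnesses---is a genuine refinement: the paper's proof only explicitly handles the finite case (pick a maximal shattered set and extend its witness), and in the paper's sole application of the lemma the ambient classes are assumed to have finite fat-shattering dimension, so the infinite case never arises there.
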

\bepf
Fix $\F$ and $\gamma$. For any choice of $r\in\R^\X$,
if $\F-r$
$\gamma$-shatters some set $S\subseteq\X$ at zero,
then then $\F$ $\gamma$-shatters $S$ in the usual sense
with shift $r_S\in\R^S$ (i.e., the restriction of $r$ to $S$).
This proves that the left-hand side of \cref{eq:fat-aux}
is at least as large as the right-hand side.
Conversely, suppose that
$\F$ $\gamma$-shatters some $S\subseteq\X$ in the usual sense, with some shift $r\in\R^S$.
Choosing $r'\in\R^\X$ by $r'_S=r$ and $r'_{\X\setminus S}=0$,
we see that $\F-r'$ $\gamma$-shatters
$S$ at zero. This proves the other direction and hence the claim.
\enpf
Consider an {\em ambiguous} function class
$F^\star\subseteq\set{0,1,\star}^X$.
We say that
$F^\star$ {\em shatters} a set $S\subseteq X$
if $F^\star(S)\supseteq\set{0,1}^S$.
We say that $\bar f\in\set{0,1}^X$ is a {\em disambiguation}
of $f^\star\in F^\star$ if
the two functions agree on $x\in X$
whenever $f^\star(x)\neq\star$.
We say that $\bar F\subseteq\set{0,1}^X$ is a disambiguation of
$F^\star$
if each $\bar f\in \bar F$
is a disambiguation of
{\em some} $f^\star\in F^\star$
and {\em every} $f^\star\in F^\star$
has a disambiguated representative $\bar f\in \bar F$.
We define $\VC(F^\star)$ as the maximum size
of a shattered set (possibly, $\infty$).

It will be convenient to visually represent
such function classes as (possibly infinite)
matrices, where the rows correspond to $f\in F$
and the columns correspond to $x\in X$.

\begin{example}
\label{ex:1}
  It might be the case that
  $\VC(F^\star)=1$
  while
  any disambiguation 
  $\bar F$
  verifies
  $\VC(\bar F)=2$:
   $$\bordermatrix{ &x_1 &x_2& x_3\cr
            f_1&    1 & 1 & 1\cr
            f_2&    0 & 1 & 1 \cr
            f_3&    1 & 0 & 1 \cr
            f_4&    \star & 0 & 0 \cr
            f_5&    0 & \star & 0\cr }.$$
It was mistakenly claimed in the conference version
\citep[Lemma 14]{attias2019improved}
that one can always find a disambiguation
$\bar F$ such that
$\VC(\bar F)\le \VC(F^\star)$.
We thank Yann Guermeur for pointing out
this error.
\end{example}

The following result provides a generic disambiguation rule that upper bounds the size of any disambiguated function classes. We reproduce it in \cref{app:gen-bound-binary-additional} for completeness.

\begin{lemma}\cite*[Theorem 13]{alon2022theory}
\label{lem:disambig}\label{lem:disambig-rule}
For $X=\N=\set{1,2,\ldots}$ and any $F^\star\subseteq\set{0,1,\star}^X$
with $\VC(F^\star)\le d$,
there is a disambiguation
$\bar F\subseteq\set{0,1}^X$ with the following property:
For each prefix $X_m:=[m]=\set{1,2,\ldots,m}$, we have
\beq
|\bar F(X_m)| \le m^{\O\paren{d\log m}}
.
\eeq
\end{lemma}

\begin{example}\citep*{alon2022theory}\label{ex:2}
Consider the following ambiguous class $F^\star$ consisting
of $5$ functions acting on the $3$ points $X=\set{x_1,x_2,x_3}$:
\beq
\bordermatrix{ &x_1 &x_2& x_3\cr
            f_1&    0 & 0 & 0 \cr
            f_2&    1 & 1 & 1 \cr
            f_3&    \star & 1 & 0 \cr
            f_4&    0 & \star & 1 \cr
            f_5&    1 & 0 & \star \cr }.
\eeq
It is straightforward to verify that $\VC(F^*)=1$
and further that any disambiguation $\bar F$
verifies $|\bar F(X)|=5$.
Contrast this with the Sauer-Shelah lemma, which
upper-bounds the number of behaviors that a class of VC-dimension
$1$ can achieve on $3$ points by $4$.
\end{example}

\paragraph{Remark.} There exist an ambiguous function class $F^\star$, such that for any disambiguation $\bar F$ it holds that $\VC(\bar F) = \infty$.
See \cite*{alon2022theory}, Theorem 1.

\hide{
\begin{example}\label{ex:2}(due to Steve Hanneke, Ron Holzman, Shay Moran)
The ambiguous class can only shatter one point. Any disambiguation resolving $\star$ necessarily entails shattering two points. 
Resolving $f_3$ with $0$ involves shattering $J=\set{1,2}$ 
and resolving it with $1$ involves shattering $J=\set{1,3}$.
Moreover, the size of distinct rows is $\Phi(3,1)+1$ for any disambiguation.
   $$\bordermatrix{ &x_1 &x_2& x_3\cr
            f_1&    0 & 0 & 0 \cr
            f_2&    1 & 1 & 1 \cr
            f_3&    \star & 1 & 0 \cr
            f_4&    0 & \star & 1 \cr
            f_5&    1 & 0 & \star \cr }$$
\end{example}
\begin{example}\label{ex:3}
The ambiguous class can only shatter one point. Any disambiguation resolving $\star$ necessarily entails shattering two points, for example, resolving $f_3$ with $1$ involves shattering $J=\set{4,5}$ 
and resolving it with $0$ involves shattering $J=\set{1,5}$.
Moreover, the size of distinct rows is $\Phi(5,1)+2$ for any disambiguation.
$$\bordermatrix{ &x_1 &x_2& x_3 & x_4 & x_5         \cr
            f_1& 1      &   0       &   1     &  0      &   1  \cr
            f_2& 1      &   1       &   1     &  \star  &   1  \cr
            f_3& 1      &   1       &   0     &   1     &   \star  \cr
            f_4& 1      &   \star   &   0     &   0     &   1  \cr
            f_5& 0      &   1       &   1     &   \star &   1  \cr
            f_6& 0      &   1       &   \star &   1     &   0  \cr
            f_7& 1      &   0       &   \star &   1     &   \star  \cr
            f_8& \star  &   \star   &   0     &   0     &   0  \cr  }$$

\end{example}

\begin{example}\label{ex:4}(due to Uri Stemmer)
The ambiguous class can only shatter two points. Any disambiguation resolving $\star$ necessarily entails shattering three points.
Resolving $\star$ with $0$ involves shattering $J=\set{3,4,5}$
and resolving it with $1$ involves shattering $J=\set{1,2,5}$.

$$\bordermatrix{ &x_1 &x_2& x_3 & x_4 & x_5         \cr
            f_1&     0  &   0  &   0  &  0   &   0  \cr
            f_2&     0  &   0  &   1  &  0   &   0  \cr
            f_3&     0  &   0  &   1  &   1  &   0  \cr
            f_4&     0  &   0  &   0  &   0  &   1  \cr
            f_5&     0  &   0  &   0  &   1  &   1  \cr
            f_6&     0  &   0  &   1  &   0  &   1  \cr
            f_7&     0  &   0  &   1  &   1  &   1  \cr
            f_8&     0  &   1  &   0  &   0  &   0  \cr  
            f_9&     0  &   1  &   1  &   0  &   0  \cr
            f_{10}&  1  &   0  &   0  &   0  &   0  \cr
            f_{11}&  1  &   0  &   1  &   0  &   0  \cr
            f_{12}&  1  &   0  &   0  &   0  &   1  \cr
            f_{13}&  1  &   0  &   0  &   1  &   1  \cr 
            f_{14}&  1  &   1  &   0  &   0  &   0  \cr
            f_{15}&  1  &   1  &   0  &   0  &   1  \cr
            f_{16}&  0  &   1  &   0  &   1  &   \star  \cr }$$
                
\end{example}  

}
\begin{lemma}
  \label{lem:k-fold-op}
Let $G:\set{-1,1}^k\rightarrow \set{-1,1}$
and let $\F_1,\ldots,\F_k\subseteq\set{-1,1}^\X$ be hypothesis classes with $\VC(\F_j)=d_j$. Denote $\bar d:=\frac1k\sum_{i=1}^k d_j$. Define the function class $G\paren{\F_1,\ldots,\F_k}=:
\set{\X\ni x \mapsto G\paren{f_1(x),\ldots,f_k(x)}: f_i \in \F_i}$.
Then,
$$\VC\paren{G\paren{\F_1,\ldots,\F_k}}
\leq
2k\log(3k)\bar d
$$
\end{lemma}
\bepf
We adapt the argument of
\citet[Lemma 3.2.3]{MR1072253}, which is stated therein for $k$-fold unions
and intersections.
The $k=1$ case is trivial, so assume $k\ge 2$.
For any $S\subseteq\X$,
define $G\paren{\F_1,\ldots,\F_k}(S)\subseteq\set{-1,1}^S$
to be the restriction of $G\paren{\F_1,\ldots,\F_k}$ to $S$.
The key observation is that
\beq
|G\paren{\F_1,\ldots,\F_k}
(S)|
&\le& \prod_{j=1}^k |\F_j(S)|\\
&\le&
\prod_{j=1}^k(e|S|/d_j)^{d_j} \\
&\le&
(e|S|/\bar d)^{\bar d k}.
\eeq
The last inequality requires proof.
After taking logarithms and dividing both sides by $k$,
it is equivalent to the claim that
$$ \bar d\log\bar d\le \frac1k\sum_{j=1}^k d_j\log d_j\;,$$
an immediate consequence of Jensen's inequality applied to the convex function $f(x)=x\log x$.

The rest of the argument is identical that of \citet{MR1072253}:
one readily verifies that
for $m=|S|=2\bar d k\log(3k)$,
we have
$(em/\bar d)^{\bar d k}<2^m$.
\enpf

\bepf [Proof of Theorem \ref{thm:fat-kmax}]
To prove the Theorem,
it suffices to show that for all $\F_j\subseteq\R^S$
\begin{align}
\label{eq:kfat0}
\fat^0_\gamma(\max((\F_{j})_{j\in[k]}))
\le
\O(\log(k)\log^2(m)\sum_{j=1}^k\fat^0_\gamma(\F_{j})).
\end{align}
Indeed,
we observe that $r$-shift commutes with the max operator:
\begin{align}\label{eq:r-shift-max-commutes}
\max((\F_j-r)_{j\in[k]})
= \max((\F_{j})_{j\in[k]})-r. 
\end{align}
By applying
Lemma~\ref{lem:fat0} to the function class $\max((\F_{j})_{j\in[k]})$ and using \cref{eq:r-shift-max-commutes}, we have
\beq
\fat_\gamma(
\max((\F_{j})_{j\in[k]})
) =
\max_r \fat_\gamma^0(
\max((\F_{j})_{j\in[k]})
-r
)
=\max_r \fat_\gamma^0(
\max((\F_j-r)_{j\in[k]})
)
.
\eeq
Applying \cref{eq:kfat0} to classes $F_j-r$ obtains
\begin{align*}
    \max_r \fat_\gamma^0(
    \max((\F_j-r)_{j\in[k]})
    \leq
    \max_r \O(\log(k)\log^2(m)\sum_{j=1}^k\fat^0_\gamma(\F_j-r)),
\end{align*}
Then,
\begin{align*}
    \max_r \O(\log(k)\log^2(m)\sum_{j=1}^k\fat^0_\gamma(\F_j-r))
    &
    \leq
    \O(\log(k)\log^2(m)\sum_{j=1}^k\max_{r_j}\fat^0_\gamma(\F_j-r_j))
    \\
    &
    =
    \O(\log(k)\log^2(m)\sum_{j=1}^k\fat_\gamma(\F_j)),
\end{align*}
where the last identity follows from Lemma~\ref{lem:fat0}.

Now we proceed to prove \cref{eq:kfat0}. 
First, convert $\F_j\subseteq\R^S$ to a finite class
$\F_j^\star\subseteq\set{-\gamma,\gamma,\star}^S$ for $S=\set{x_1,\ldots,x_m}$, as follows.
For every vector in $v\in\F_j$,
define $v^\star\in\F_j^\star$ by:
$v^\star_i=\sgn(v_i)\gamma$ if $|v_i|\ge\gamma$
and
$v^\star_i=\star$ else.
The notion of shattering (at zero) remains the same:
a set $T\subseteq S$ is shattered if
$\set{-\gamma,\gamma}^T \subseteq \F_j^\star(T)$. Note that $\F_j^\star$ and $\F_j$ has the same $\gamma$-shattering dimension at zero.

Lemma~\ref{lem:disambig}
furnishes
a mapping $\phi:\F_j^\star\to\set{-\gamma,\gamma}^S$
such that
(i) for all $v\in\F_j^\star$ and all $i\in[m]$, we have $v_i\neq\star\implies (\phi(v))_i=v_i$
and (ii) $\phi(\F_j^\star)$ does not shatter more points than $\F_j^\star$ times $\log^2(m)$.
Together, properties (i) and (ii) imply that for all $j\in [k],$
\beq
\fat^0_\gamma(\phi(\F_j^\star))
\leq 
\O(\fat^0_\gamma(\F_j)\cdot\log^2(m)).
\eeq

Finally, observe that any set of points in $S$ $\gamma$-shattered by
$\max((\F_j)_{j\in[k]})$ are also shattered by\\ $\max((\phi(\F_j^\star))_{j\in[k]})$.
Applying Lemma~\ref{lem:k-fold-op}
with $G(f_1,\ldots,f_k)(x)=\max_{j\in[k]}f_j(x)$
shows that\\
$\max((\phi(\F_j^\star))_{j\in[k]})$
cannot shatter
$2\log(3k)\sum_{j=1}^kd_j$
points, where
\beq
d_j=
\fat^0_\gamma(\phi(\F_j^\star))
\leq 
\O(\fat^0_\gamma(\F_j)\cdot\log^2(m))
.
\eeq
We have shown that,
\beq
\fat^0_\gamma(\max((\F_j)_{j\in[k]}))
\le
\O(\log(k)\log^2(m)\sum_{j=1}^k\fat^0_\gamma(\F_j)),
\eeq
this concludes the proof of \cref{eq:kfat0}.
\enpf
\subsection{Shattering Dimension of $L_1$ and $L_2$ Loss Classes}
\begin{lemma}
  \label{lem:fatdimension-lossclassL2-bound}Let $\H \subset\R^m$ be a real valued function class on m points. denote $L_\H^1$ and $L_\H^2$ the $L_1$ and $L_2$ loss classes of $\H$ respectively. Assume $L_\H^2$ is bounded by $M$. For any $\H$,
\beq
\fat_\gamma(L_\H^1)&\le& \O(\log^2(m)\fat_{\gamma}(\H)),\;\;\;\mbox{ and }\;\;\;
\fat_\gamma(L_\H^2)\le \O(\log^2(m)\fat_{\gamma/2M}(\H)).
\eeq
\end{lemma}

\begin{lemma}\label{lem:fat-losses}
Let $\ell: \Y \times \Y \rightarrow \R $ be an arbitrary loss function. For $j\in[k]$ define 
\beq
    \F_\H^{(j),\ell}:=\set{\X\times \Y \ni (x,y)\mapsto \ell(h(z_j),y):h\in \H,\;\rho(x)=\set{z_1,\dots,z_k}},
\eeq
and 
\beq
    L_\H^\ell :=  \set{Z\times\Y\ni(z,y)\mapsto \ell(h(z),y): h\in \H}.
\eeq
Then, for all $\gamma>0$,
\beq
\fat_\gamma(\F^{(j),\ell}_{\H}) \le \fat_\gamma(L_\H^\ell).
\eeq
\end{lemma}

\bepf
The claim stems from the inclusion $\F^{(j),\ell}_{\H} \subseteq L_\H^\ell$.
\enpf

\hide{
\begin{lemma}\label{vc-unionclass-bound}
For $\H,\G\subseteq\set{-1,1}^X$ with finite VC-dimension, we have
\beq
\VC(\H\cup \G)\leq \VC(\H)+\VC(\G)+1,
\eeq
and for
$\H,\G\subseteq \R^X$ with finite $\gamma$-fat-shattering dimension, we have
\beq
\fat_{\gamma}(\H\cup \G)\leq \fat_{\gamma}(\H)+\fat_{\gamma}(\G)+1.
\eeq
\end{lemma}
}

\begin{proof}[Proof of Lemma~\ref{lem:fatdimension-lossclassL2-bound}]
  For any $\X$ and any function class $\H \subset\R^\X$,
define the {\em difference class} $\H^\Delta\subset\R^{\X\times\R}$ as
\beq
\H^\Delta = \set{ \X\times\R\ni (x,y)\mapsto \Delta_h(x,y):= h(x)-y ; h\in\H}.
\eeq
In words:
$\H^\Delta$
consists of all functions $\Delta_h(x,y)= h(x)-y$ indexed by $h\in\H$.

It is easy to see that for all $\gamma>0$, we have
$\fat_\gamma(\H^\Delta)\le\fat_\gamma(\H)$.
Indeed, if $\H^\Delta$ $\gamma$-shatters some set
$\set{(x_1,y_1),\ldots,(x_k,y_k)}\subset\X\times\R$
with shift $r\in\R^k$,
then $\H$ $\gamma$-shatters the set
$\set{x_1,\ldots,x_k}\subset\X$
with shift $r+(y_1,\ldots,y_k)$.

Next, we observe that taking the absolute value does not significantly increase
the fat-shattering dimension. Indeed, for any real-valued function class $\F$,
define
$\absop(\F)
:=\set{|f|;\;f\in\F}$.
Observe that $\absop(\F)\subseteq \max((F_j)_{j\in[2]})$,
where $\F_1=\F$ and $\F_2=-\F=:\set{-f;f\in\F}$.
It follows from Theorem~\ref{thm:fat-kmax} that
\begin{align}
\label{eq:fat-abs}
\fat_\gamma(\absop(\F))<\O(\log^2(m)(\fat_\gamma(\F)+\fat_\gamma(-\F)))< \O(\log^2(m)\fat_\gamma(\F)).
\end{align}
\hide{
Clearly,
$\fat_\gamma(\absop(\F))=\fat_\gamma(\F)$.
Because $\F$ is positive. Then,
\beq
\fat_\gamma(\absop(\F)) &\le& \fat_\gamma(\absop(\F_+)))+\fat_\gamma(\absop(\F_-))+1 \\
&=&
2\fat_{\gamma}(\F)+1,
\eeq
Finally,
}
Next,
define $\F$ as the $L_1$ loss class of $\H$:
\beq
\F = \set{ \X\times\R\ni (x,y)\mapsto |h(x)-y)| ; h\in\H}.
\eeq
Then
\beq
\fat_\gamma(\F) &=& \fat_\gamma(\absop(\H^\Delta)) \\
&\le& \O(\log^2(m)\fat_{\gamma}(\H^\Delta)) \\
&\le& \O(\log^2(m)\fat_{\gamma}(\H));
\eeq
this proves the claim for $L_1$.

To analyze the $L_2$ case,
consider
$\F\subset[0,M]^\X$ and define
$\F^{\circ2}
:=\set{f^2;f\in\F}$.
We would like to bound
$\fat_\gamma(\F^{\circ2})$
in terms of
$\fat_\gamma(\F)$.
Suppose that $\F^{\circ2}$ $\gamma$-shatters
some set
$\set{x_1,\ldots,x_k}$
with shift $r^2=(r_1^2,\ldots,r_k^2)\in[0,M]^k$
(there is no loss of generality in assuming
that the shift has the same range as the function class).
Using the elementary inequality
\beq
|a^2-b^2| \le 2M|a-b|,
\qquad
a,b\in[0,M],
\eeq
we conclude that $\F$ is able to $\gamma/(2M)$-shatter
the same $k$ points
and thus
$\fat_\gamma(\F^{\circ2})
\le
\fat_{\gamma/(2M)}(\F)$.

To extend this result to the case
where $\F\subset[-M,M]^\X$, we use \cref{eq:fat-abs}.
\hide{
decompose $\F$
as $\F=\F_+\cup\F_-$, where $\F_+
=\F\cap[0,M]^\X
$
and
$\F_-
=\F\cap[-M,0]^\X
$.
Clearly, then,
$
\F^{\circ2}
=
(\F_+)^{\circ2}
\cup
(\F_-)^{\circ2}
$.
Furthermore, using Lemma \ref{vc-unionclass-bound} we observe that for any $\H,\G\subset\R^\X$, we have
\beq
\fat_\gamma(\H\cup\G)\le \fat_\gamma(\H)+\fat_\gamma(\G)+1,
\eeq
whence
\beq
\fat_\gamma(\F^{\circ2}) &\le& \fat_\gamma((\F_+)^{\circ2})+\fat_\gamma((\F_-)^{\circ2})+1 \\
&\le&
\fat_{\gamma/(2M)}(\F_+)
+
\fat_{\gamma/(2M)}(\F_-)
+1 \\
&\le&
2\fat_{\gamma/(2M)}(\F)+1,
\eeq
where the second inequality follows from the fact that $\fat_\gamma(-\F)=\fat_\gamma(\F)$.
}
In particular,
define $\F$ as the $L_2$ loss class of $\H$:
\beq
\F = \set{ \X\times\R\ni (x,y)\mapsto (h(x)-y)^2 ; h\in\H}.
\eeq

Then
\beq
\fat_\gamma(\F) &=& \fat_\gamma((\H^\Delta)^{\circ2}) \\
&=& \fat_\gamma((\absop(\H^\Delta))^{\circ2}) \\
&\le& \fat_{\gamma/(2M)}(\absop(\H^\Delta)) \\
&\le& \O(\log^2(m)\fat_{\gamma/(2M)}(\H^\Delta)) \\
&\le& \O(\log^2(m)\fat_{\gamma/(2M)}(\H)).
\eeq

\end{proof}
\subsection{Auxiliary Results}
Finally, before providing formal proofs, we use the following result on the fat-shattering of convex hulls. We then conclude a bound on the fat-shattering dimension of $k$-fold maximum of convex hulls using Theorem \ref{thm:fat-kmax}.
\begin{theorem}\cite[Theorem 1.5]{mendelson2001size}\label{thm:fat-conv} There is an absolute constant $C$, such that for every function class $F$ bounded by $[0,1]$ and every $\gamma >0$,
\beq
\fat_{\gamma}(\conv(F))
\leq
C\frac{\fat_\frac{\gamma}{4}(\F)}{\gamma^2}\log^2\left(\frac{2\fat_\frac{\gamma}{4}(\F)}{\gamma}\right)
\eeq
\end{theorem}

\begin{corollary}\label{cor:fat-max-conv-bound}
Let $S=\set{x_1,\ldots,x_m}$. For any $k$ real valued functions classes $\F_{1},\dots,\F_{k} \subseteq [0,1]^S$,
\beq
\fat_\gamma(\max((\conv(\F_{j}))_{j\in[k]}))
\leq
\O \left( k\log(k)\log^2(m) \max_{j\in[k]}\left(\frac{\fat_\frac{\gamma}{4}(\F_{j})}{\gamma^2}\log^2\left(\frac{\fat_\frac{\gamma}{4}(\F_{j})}{\gamma}\right)\right)\right).
\eeq

\end{corollary}

\begin{proof}
\beq
\fat_\gamma(\max((\conv(\F_{j}))_{j\in[k]})(S))
&\overset{(i)}
\leq &
\O\paren{\log(k)\log^2(m)\sum_{j=1}^k\fat_\gamma(\conv(\F_{j}))} \\
&\overset{(ii)}\leq&
\O\paren{\log(k)\log^2(m)\sum_{j=1}^k \frac{\fat_\frac{\gamma}{4}(\F_{j})}{\gamma^2}\log^2\left(\frac{\fat_\frac{\gamma}{4}(\F_{j})}{\gamma}\right)}\\
&\leq&
\O\paren{k\log(k)\log^2(m) \max_{j\in[k]}\Bigg(\frac{\fat_\frac{\gamma}{4}(\F_{j})}{\gamma^2}\log^2\left(\frac{\fat_\frac{\gamma}{4}(\F_{j})}{\gamma}\right)\Bigg)},
\eeq
where (i) stems from Theorem \ref{thm:fat-kmax} and (ii) stems from Theorem \ref{thm:fat-conv}.
\end{proof}

\begin{theorem}\label{thm:rad-fatdimension-bound} \citep{dudley1967sizes,MR1965359} For any $\F\subseteq[-1,1]^X$, any $\gamma \in (0,1)$ and $S=(w_1,\ldots,w_n)=\mathbf{w}\in \W^n$,
\beq
R_n(\F|\mathbf{w})\leq \sqrt{\frac{C}{n}}\int_{0}^{1} \sqrt{\fat_{c\gamma}(\F) \log\left(\frac{2}{\gamma}\right)}d\gamma,
\eeq
where  $c$ and $C$ are universal constants.

When the integral above diverges, the bound can be refined by
\beq
R_n(\F|\mathbf{w})\leq  
\inf_{\alpha \geq 0}
\set{4\alpha + \sqrt{\frac{C}{n}}\int_{\alpha}^{1} \sqrt{\fat_{c\gamma}(\F) \log\left(\frac{2}{\gamma}\right)}d\gamma}.
\eeq
\end{theorem}
\subsection{Proofs}
We now formally prove our main results for this section, generalization bounds in the case of real-valued functions.
\begin{proof}[Proof of Theorem~\ref{thm:regression-bound-1}]
We follow the same steps as in the proof of \cref{thm:binary-bound}
with two changes. The first one is bounding the empirical Rademacher complexity via the fat-shattering dimension (instead of the $\VC$-dimension in the binary case), using \cref{thm:rad-fatdimension-bound},
\begin{align*}
 R_n(\F|\mathbf{w})\leq  
\inf_{\beta \geq 0}
\set{4\beta + \sqrt{\frac{C}{n}}\int_{\beta}^{1} \sqrt{\fat_{c\gamma}(\F) \log\left(\frac{2}{\gamma}\right)}d\gamma}:=g_n(\F),
\end{align*}
this bound holds for every sequence of points.
The second difference is that we now need to bound the maximum fat-shattering dimension (instead of the $\VC$-dimension) over the classes $\F_{\H}^{(j)}$, for that purpose we use Lemma \ref{lem:fatdimension-lossclassL2-bound} and Lemma\ref{lem:fat-losses},
\begin{align*}
    \max_{j\in[k]}\fat_{\gamma}(\F_{\H}^{(j)})
    \leq
    \O(\log^2(n)\fat_{\gamma}(\H)).
\end{align*}
Denote
\begin{align*}
m_n(\H)=\inf_{\beta\geq 0}\set{4\beta +\O\left( \sqrt{\frac{\log^4(n)}{n}}\int_{\beta}^{1} \sqrt{\fat_{c\gamma}(\H) \log\left(\frac{1}{\gamma}\right)}d\gamma \right)}.
\end{align*}
Similar to \cref{thm:binary-bound}, the function $x \log^{{3}/{2}+\alpha} ({n}/{x})$ is monotonic increasing for $x\in (0,n/e^{3/2+\alpha}]$. For sufficiently large $n$ $\left(g_n(\F)\leq n/e^{3/2+\alpha}\right)$  and considering the aforementioned changes 
we have that for any $\alpha >0$ there exists a constant $C_\alpha>0$ such that

\begin{align*}
&
R_n(\max((\conv(\F^{(j)}_{\H}))_{j\in[k]})|\mathbf{x}\times \mathbf{y})
\\
&\leq
C_\alpha\sqrt{k}\cdot \max_{j\in [k]}\max_{\mathbf{w}=w_{1:n}}R_n( \F_\H^{(j)}|\mathbf{w})\cdot \log^{\frac{3}{2}+\alpha}\Bigg(\frac{n}{\max_{j\in [k]}\max_{\mathbf{w} = w_{1:n}}R_n(\F_\H^{(j)}|\mathbf{w})}\Bigg)
\\
&\leq
\O\left( C_\alpha \sqrt{k}\max_{j\in[k]}g_n(\F_\H^{(j)})
\cdot \log^{\frac{3}{2}+\alpha}\left(\frac{n}{\max_{j\in[k]}g_n(\F_\H^{(j)})}\right) \right)
\\
&=
\O\left( C_\alpha \sqrt{k} \cdot m_n(\H)\cdot \log^{\frac{3}{2}+\alpha}\left(\frac{n}{m_n(\H)}
\right)\right).
\end{align*}
We conclude that
\beq
|\Risk(\Tilde{h})-\mywidehat{\Risk}(\Tilde{h})|
\leq
\O\left(C_\alpha \sqrt{k} \cdot m_n(\H)\cdot \log^{\frac{3}{2}+\alpha}\left(\frac{n}{m_n(\H)}
\right) + \sqrt{\frac{\log\left(\frac{1}{\delta}\right)}{n}}\right).
\eeq

\end{proof}
\begin{proof}[Proof of Theorem~\ref{thm:regression-bound-2}]
  Similar to the proof for binary case,
  we bound the empirical Rademacher complexity of the loss class of $\Tilde{h}\in\Delta(\H)$.
\beq
|\Risk(\Tilde{h})-\mywidehat{\Risk}(\Tilde{h})|&=& |E_{(x,y)\sim D}\max_{j\in[k]}\sum_{t=1}^T \alpha_t f_t^{(j)}(x,y)-\frac{1}{n}\underset{(x,y)\in S}{{\displaystyle \sum}}\max_{j\in[k]}\sum_{t=1}^T \alpha_t f_t^{(j)}(x,y)| \\
&\le&
2R_n(\max((\conv(\F^{(j)}_{\H}))_{j\in[k]})|\mathbf{x}\times \mathbf{y})+3\sqrt{\frac{\log\left(\frac{2}{\delta}\right)}{2n}},
\eeq
where the inequality stems from applying Theorem \ref{thm:rad-gen-bound} 
on the function class $\conv^{\rho}(L_\H)$ and \cref{eq:conv-rho-max}.
From Theorem \ref{thm:rad-fatdimension-bound} we have
\begin{align*}
&
R_n(\max((\conv(\F^{(j)}_{\H}))_{j\in[k]})|\mathbf{x}\times \mathbf{y})
\\
&\leq
\inf_{\alpha \geq 0}
\set{4\alpha + \sqrt{\frac{C_1}{n}}\int_{\alpha}^{1} \sqrt{\fat_{c\gamma}(\max((\conv(\F^{(j)}_{\H}))_{j\in[k]})) \log\left(\frac{2}{\gamma}\right)}d\gamma}.
\end{align*}
Using Corollary \ref{cor:fat-max-conv-bound} we upper bound the inner term by
\beq
\O\paren{\sqrt{\frac{k\log(k)\log^2(n)}{n}}\int_{\alpha}^{1} \sqrt{\log\left(\frac{1}{\gamma}\right)\max_{j\in[k]}\Bigg( \frac{\fat_\frac{c\gamma}{4}(\F^{(j)}_{\H}(S))}{\gamma^2}\log^2\bigg(\frac{\fat_\frac{c\gamma}{4}(\F^{(j)}_{\H}(S))}{\gamma}\bigg)\Bigg)}d\gamma}.
\eeq 
Lemmas \ref{lem:fatdimension-lossclassL2-bound}
and \ref{lem:fat-losses} concludes the proof with
\beq
\O\Bigg(\sqrt{\frac{k\log(k)\log^4(n)}{n}}\int_{\alpha}^{1} \sqrt{\log\left(\frac{1}{\gamma}\right)\Bigg( \frac{\fat_\frac{c\gamma}{4}(\H)}{\gamma^2}\log^2\bigg(\frac{\fat_\frac{c\gamma}{4}(\H)}{\gamma}\bigg)\Bigg)}d\gamma\Bigg).
\eeq

\end{proof}

\begin{proof}[Proof of Theorem~\ref{thm:regression-bound-3}]
Denote the sample size by $|S|=n$.
We start off with a known generalization bound by \citet{bartlett1998prediction}, showing that for any function class $\H : \Z \rightarrow [0,1]$, the sample size is at least
\begin{align*}
n \leq \O\left(\frac{1}{\epsilon^2}\left(\fat_{\frac{\epsilon}{5}}(\H)\log^2\frac{1}{\epsilon}+\log\frac{1}{\delta}
\right)\right). 
\end{align*}
In our case, the function class
we are interested in is $\max((\conv(\F^{(j)}_{\H}))_{j\in[k]})$.
by Corollary \ref{cor:fat-max-conv-bound} we have that
\begin{align*}
\fat_{\epsilon}(\max((\conv(\F^{(j)}_{\H}))_{j\in[k]}))
\leq
\O\left(
k\log(k)\log^2(n)\left(\frac{\fat_\frac{\epsilon}{4}(\H)}{\epsilon^2}\log^2\left(\frac{\fat_\frac{\epsilon}{4}(\H)}{\epsilon}\right)\right)\right).   
\end{align*}
Thus, it suffices to solve the following
\begin{align*}
    n\leq \O\left(\left(\frac{1}{\epsilon^2}\left(k\log(k)\log^2(n)\left(\frac{\fat_\frac{\epsilon}{4}(\H)}{\epsilon^2}\log^2\left(\frac{\fat_\frac{\epsilon}{4}(\H)}{\epsilon}\right)\right)\log^2\frac{1}{\epsilon}+\log\frac{1}{\delta}\right)
\right)\right).
\end{align*}
Denote $d = \fat_\frac{\epsilon}{4}(\H)$, $A=\frac{1}{\epsilon^2}\log\frac{1}{\delta}$, and $B = k\log(k)\frac{d}{\epsilon^4}\log^2\frac{d}{\epsilon}\log^2\frac{1}{\epsilon}$.
It suffices to take $n_0=\O\paren{B\log^2 B + A}$, therefore,
\begin{align*}
    n
    &\leq
     \O 
     \left(
     \frac{1}{\epsilon^2}
     \left(
      k\log(k)\frac{d}{\epsilon^2}\log^2\frac{d}{\epsilon}\log^2\frac{1}{\epsilon}\log^2\left(k\log(k)\frac{d}{\epsilon^4}\log^2\frac{d}{\epsilon}\log^2\frac{1}{\epsilon}\right)
      +
     \log\frac{1}{\delta}
     \right)
     \right).
\end{align*}
\end{proof}
\newpage
\section*{Acknowledgments}
We deeply thank the anonymous reviewers for many insightful comments and suggestions, in particular, for providing us the proof about the $\VC$-dimension of the zero-one robust loss class (Lemma \ref{lem:vc-robust-loss}).

We thank Digvijay Boob, Praneeth Netrapalli, and Yann Guermeur for pointing out a mistakes in the conference version of this paper. We thank Shay Moran, Ron Holzman, and Steve Hanneke for resolving a mistake (see Lemma \ref{lem:disambig}) and providing 
\cref{ex:2}.
We are also gratful to Uri Stemmer for fruitful discussions. IA is thankful to Doron Cohen and Netanel Rabinowitz for many helpful discussions.

The work of AK and YM was supported in part by grants from the Israel Science Foundation (ISF). The work of IA was supported in part by Kreitman School and the Vatat Scholarship from the Israeli Council for Higher Education.

\bibliography{refs}

\appendix

\section{Additional Proofs}
\begin{proof}[of Lemma \ref{lem:vc-robust-loss}]
Take an arbitrary sample $S=\{(x_1,y_1), ...., (x_n,y_n)\}$. Construct the set that contains all possible corrupted examples on inputs from $S$, 
$S_{\rho}=\bigcup_{i\in [n]}\set{z:z\in \rho(x_i)}$,
the size of $S_{\rho}$ is at most $nk$. 
Denote by $L^\rho_{\H}(S)$ the set of all possible behaviors on $S$ using functions in $L^\rho_{\H}$, and by $\H(S_{\rho})$, the set of all possible behaviors on $S_{\rho}$ using functions in $\H$.
Namely, $L^\rho_{\H}(S)=\set{(\ell(x_1,y_1),\dots,\ell(x_n,y_n)):\ell\in L^\rho_{\H}}$ and $\H(S_{\rho})=\set{(h(z_1),\dots,h(z_m)):h\in \H}$. Observe that each pattern in the set $L^\rho_{\H}(S)$ will map to at least one pattern in $\H(S_{\rho})$, implying that the size of $L^\rho_{\H}(S)$ is at most the size of $\H(S_{\rho})$. Using Sauer’s lemma, the size of $\H(S_{\rho})$ is at most $(nk)^d$, solving for $n$ such that $(nk)^d < 2^n$ yields the stated bound.
\end{proof}

\label{app:reg-hyperplane-bound-1}
\begin{proof}[Proof of Corollary~\ref{regression-bound-hyperplanes-1}]
We 
seek
an upper bounds on the following term in the case of homogeneous hyperplanes 
with norm
bounded by $1$.
\begin{align*}
 m_n(\H)
&=
\inf_{\beta\geq 0}\set{4\beta +\O\left( \sqrt{\frac{\log^4(n)}{n}}\int_{\beta}^{1} \sqrt{\fat_{c\gamma}(\H) \log\left(\frac{1}{\gamma}\right)}d\gamma \right)},
\\
&\leq
\inf_{\beta\geq 0}\set{4\beta + \O\left(\sqrt{\frac{\log^4(n)}{n}}\int_{\beta}^{1} \frac{1}{\gamma}\sqrt{\log\left(\frac{2}{\gamma}\right)}d\gamma\right)},
\end{align*}
where the inequality stems from
the bound
$\fat_{\delta}(H)\leq \frac{1}{\delta^2}$ \citep{299098}.

Compute
\beq
  \int_\beta^1
  \frac1{t}
  \sqrt{
    \log\frac{2}{t}
  }
dt
=
\frac23\paren{ (\log 2/\beta)^{3/2}-(\log2)^{3/2}},
\eeq
choosing $\beta=1/\sqrt{n}$ yields
\beq
 m_n(\H) 
&\le&
\O\paren{\sqrt{\frac{1}{n}}\log^{\frac{7}{2}}\left(n\right)}.
\eeq
The function $x \log^{{3}/{2}+\alpha} ({n}/{x})$ is monotonic increasing for $x\in (0,n/e^{3/2+\alpha}]$. Then, for sufficiently large $n$, 
$\left(\log^{7/2}(n)e^{3/2+\alpha}\right)^{2/3}\leq n$  
we have
\beq
m_n(\H)\cdot \log^{\frac{3}{2}+\alpha}\left(\frac{n}{m_n(\H)}\right)
\leq
\O\paren{\sqrt{\frac{1}{n}}\log^{\frac{7}{2}}\left(n\right)\log^{\frac{3}{2}+\alpha}\left(\frac{n}{\log^{\frac{7}{2}}\left(n\right)}\right)}
.
\eeq
\end{proof}

\begin{proof}[of Lemma \ref{lem:disambig-rule}]
  For any finite sequence $
  (x_1,y_1),\ldots,(x_k,y_k)
  $
  with $x_i\in X$, $y_i\in\set{0,1}$,
  and $x_1<\ldots<x_k$, denote by
  $\evalat{F^\star}{(x_1,y_1),\ldots,(x_k,y_k)}
  $
  the subfamily of those members of $F^\star$
  that label the point $x_i$ with $y_i$, for all $i$.
  For such a constrained subfamily, we define its {\em weight}:
  \beq
  w(\evalat{F^\star}{(x_1,y_1),\ldots,(x_k,y_k)})
  &=&
  \sum_{S}\frac1{n(S)^{d+1}},
  \eeq
  where the summation is over all nonempty subsets $S$
  of $\N\setminus\set{1,\ldots,x_k}$
  that are shattered by this subfamily,
  and $n(S)$ denotes the largest element of $S$.
  The definition applies verbatim to the special case
  where $k=0$, i.e.,
  $\evalat{F^\star}{\emptyset}=F^\star$.
Clearly, if $c$ is a prefix of $c'$,
then
$
w(\evalat{F^\star}{c})
\ge
w(\evalat{F^\star}{c'})
$,
and hence the maximum weight is achieved by
$\evalat{F^\star}{\emptyset}=F^\star$.
The latter is upper-bounded by
\beqn
\label{eq:w(F)}
w(F^\star) \le \sum_{n\in\N}\frac{n^{d-1}}{n^{d+1}}
=
\sum_{n\in\N}\frac1{n^2}
=\frac{\pi^2}{6},
\eeqn
where the numerator $n^{d-1}$
accounts for the number of 
of 
subsets of $[n]$ of size at most $d$ which have $n$ as their largest element.

Any constrained subfamily
  $\evalat{F^\star}{(x_1,y_1),\ldots,(x_k,y_k)}$
  induces the ``majority'' classifier
  $M[\evalat{F^\star}{(x_1,y_1),\ldots,(x_k,y_k)}]
  :X\to\set{0,1}
  $ as follows:
  \beqn
  \label{eq:halving}
M[\evalat{F^\star}{(x_1,y_1),\ldots,(x_k,y_k)}]
(x)
  &=&
  \pred{
  w(\evalat{F^\star}{(x_1,y_1),\ldots,(x_k,y_k),(x,1)})
  >
  w(\evalat{F^\star}{(x_1,y_1),\ldots,(x_k,y_k),(x,0)})
  }
  \eeqn
(ties may be broken arbitrarily, and the rule
above favors $0$ in such cases).
We observe that
\beq
  w(\evalat{F^\star}{(x_1,y_1),\ldots,(x_k,y_k)})
  &\ge&
  w(\evalat{F^\star}{(x_1,y_1),\ldots,(x_k,y_k),(x,1)})
+
w(\evalat{F^\star}{(x_1,y_1),\ldots,(x_k,y_k),(x,0)})
,
\eeq
with equality occurring iff no $f^\star\in F^\star$
verifies $f^\star(x)=\star$.

We now describe the disambiguation procedure.
We proceed one ``row'' $f^\star\in F^\star$
at a time.
For a given $f^\star\in F^\star$, initialize
the ``constraint'' sequence $c$ to be empty
(i.e., to be of length $k=0$).
Predict the label at $x=1$
via 
$y=M[\evalat{F^\star}{c}](x)$.
The prediction is said to be a
{\em mistake} if $f^\star(x)\neq\star$ and $y\neq f(x)$.
In case of a mistake, append $(x,f^\star(x))$
to the end of the constraint sequence $c$
and leave $c$ unchanged otherwise.
Repeat the procedure for $x=2$:
predict $y=M[\evalat{F^\star}{c}](x)$
and append $(x,f^\star(x))$ to $c$ in case of a mistake.
Repeating these steps for $x=1,2,\ldots,m$
produces a disambiguation $\bar f$ of $f^\star$.
To disambiguate the next ``row'' of $F^\star$,
re-initialize $c:=\emptyset$ and repeat
the procedure above for $x=1,2,\ldots,m$.

Having described the construction of $\bar F$,
it remains to analyze the number of behaviors
that it can possibly attain on
a prefix of length $m$ --- that is, to bound
$|\bar F(X_m)|$. The first key observation is that
if $c$ is the constraint before a mistake
and $c'$ immediately after, then
(\ref{eq:halving}) implies that
$w(\evalat{F^\star}{c})\ge\frac12w(\evalat{F^\star}{c'})$
(i.e., the weight of the constrained family
is reduced by a half or more).
This is because a mistake is caused
by the {\em majority} being wrong,
and the updated constraint effectively removes
those members of $F^\star$ that contributed to the mistake.
The second key observation is 
that
if some
$x\le m$ witnesses the last\footnote{The case where
no mistakes are made is trivial.}
mistake when disambiguating a given $f^\star$,
the weight prior to updating the constraint
on this mistake is at least
$1/m^{d+1}$ --- because in this case, $\set{x}$ must
be a shattered set.

Together with (\ref{eq:w(F)}), these two estimates
on the weight immediately prior to the last update
imply that the number of updates $u$ satisfies
\beq
\frac1{m^{d+1}}2^{u-1} \le w(F^\star) \le \frac{\pi^2}6,
\eeq
which implies that $u=\O(d\log m)$.
To translate this into an estimate on 
$|\bar F(X_m)|$,
observe that any
$\bar f\in\bar F$ is uniquely defined by
the indices on which a mistake was made
during its disambiguation procedure.
It follows that
$|\bar F(X_m)|
\le
\O({m\choose u})
\le
m^{\O(d\log m)}
$.
\end{proof}

\paragraph{Additional Generalization Bound for Binary Classification.} 
\label{app:gen-bound-binary-additional}
We derive the result in \cref{binary_valued_worse_bound}.
Denote the sample size $|S|= n$ and $\VC(\H)=d$. Using Theorem \ref{thm:regression-bound-2} for binary valued function classes we upper bound the empirical Rademacher complexity on the sample $R_n(\max((\conv(\F^{(j)}_{\H}))_{j\in[k]})|\mathbf{x}\times \mathbf{y})$
by
\beq 
\inf_{\alpha \geq 0}
\set{4\alpha + \O\Bigg(\sqrt{\frac{k\log(k)\log^4(n)}{n}}\int_{\alpha}^{1} \sqrt{\log\left(\frac{2}{\gamma}\right)
\left(\frac{\fat_\frac{c\gamma}{4}(\H)}{\gamma^2}\log^2\left(\frac{\fat_\frac{c\gamma}{4}(\H)}{\gamma}\right)\right)}d\gamma\Bigg)}.
\eeq
For a binary valued class this is upper bounded by 
\beq 
\inf_{\alpha \geq 0}
\set{4\alpha + \O\left(\sqrt{\frac{dk\log(k)\log^4(n)}{n}}\int_{\alpha}^{1} \sqrt{\log\left(\frac{2}{\gamma}\right)\left( \frac{1}{\gamma^2}\log^2\left(\frac{d}{\gamma}\right)\right)}d\gamma\right)} \\
=
\inf_{\alpha \geq 0}
\set{4\alpha + \O\left(\sqrt{\frac{dk\log(k)\log^4(n)}{n}}\int_{\alpha}^{1} \frac{1}{\gamma}\log\left(\frac{d}{\gamma}\right)\sqrt{\log\left(\frac{2}{\gamma}\right) }d\gamma\right) }.
\eeq
Computing
\beq
\int_{\alpha}^{1} \frac{1}{\gamma}\log\left(\frac{d}{\gamma}\right)\sqrt{\log\left(\frac{2}{\gamma}\right)}d\gamma 
&=& 
\log(d)\int_{\alpha}^{1} \frac{1}{\gamma}\sqrt{\log\left(\frac{2}{\gamma}\right)}d\gamma + \int_{\alpha}^{1} \frac{1}{\gamma}\log\left(\frac{1}{\gamma}\right)\sqrt{\log\left(\frac{2}{\gamma}\right)}d\gamma \\
&\leq&
\log(d)\int_{\alpha}^{1} \frac{1}{\gamma}\sqrt{\log\left(\frac{2}{\gamma}\right)}d\gamma + \int_{\alpha}^{1} \frac{1}{\gamma}\log^{\frac{3}{2}}\left(\frac{2}{\gamma}\right)d\gamma \\
&=&
\frac{2}{3}\log(d)\left( \log^{\frac{3}{2}}\left(\frac{2}{\alpha}\right)-\log^{\frac{3}{2}}(2)\right) - \frac{2}{5} \left(\log ^{\frac{5}{2}}(2)-\log ^{\frac{5}{2}}\left(\frac{2}{\alpha }\right)\right) \\
&\leq&
\log(d)\log^{\frac{3}{2}}\left(\frac{2}{\alpha}\right) + \log ^{\frac{5}{2}}\left(\frac{2}{\alpha }\right)
\eeq
and
choosing $\alpha=\frac{1}{\sqrt{n}}$ yields
\beq
\log(d)\log^{\frac{3}{2}}(2\sqrt{n}) + \log ^{\frac{5}{2}}\left(2\sqrt{n}\right) \leq \O\left(\log(d)\log ^{\frac{5}{2}}(n)\right)
\eeq

and
\beq
R_n(\max((\conv(\F^{(j)}_{\H}))_{j\in[k]})|\mathbf{x}\times \mathbf{y})
\leq
\O\left(\sqrt{\frac{d\log^2(d)k\log(k)\log^9(n)}{n}}\right).
\eeq

\end{document}